\definecolor{lightgreen}{rgb}{.9,1,.9}
\newcolumntype{L}[1]{>{\raggedright\arraybackslash}p{#1}}
\newcolumntype{C}[1]{>{\centering\arraybackslash}p{#1}}
\newcolumntype{R}[1]{>{\raggedleft\arraybackslash}p{#1}}
\theoremstyle{plain} 
\newtheorem{proposition}{Proposition}
\newtheorem{definition}{Definition}
\newtheorem{theorem}{Theorem}
\newtheorem{assumption}{Assumption}
\def\defn{\,\coloneqq\,}
\def\argmin{\mathop{\mathsf{arg\,min}}} 
\def\argmax{\mathop{\mathsf{arg\,max}}}
\def\lim{\mathop{\mathsf{lim}}} 
\def\min{\mathop{\mathsf{min}}} 
\def\max{\mathop{\mathsf{max}}}
\def\prox{\mathsf{prox}}
\def\log{\mathsf{log}}
\def\zer{\mathsf{zer}}
\def\fix{\mathsf{fix}}
\def\dsf{\mathsf{\, d}}
\def\DnCNNast{{\text{DnCNN}^\ast}}
\def\Lmax{{L_{\mathsf{\tiny max}}}}
\def\ebm{{\bm{e}}}
\def\hbm{{\bm{h}}}
\def\xbm{{\bm{x}}}
\def\gbm{{\bm{g}}}
\def\ybm{{\bm{y}}}
\def\zbm{{\bm{z}}}
\def\rbm{{\bm{r}}}
\def\zerobm{\bm{0}}
\def\Abm{{\bm{A}}}
\def\Dbm{{\bm{D}}}
\def\Ibm{{\bm{I}}}
\def\xbmast{{\bm{x}^\ast}}
\def\xbmhat{{\widehat{\bm{x}}}}
\def\Tsf{{\mathsf{T}}}
\def\Dsf{{\mathsf{D}}}
\def\Hsf{{\mathsf{H}}}
\def\Nsf{{\mathsf{N}}}
\def\Gsf{{\mathsf{G}}}
\def\Isf{{\mathsf{I}}}
\def\Psf{{\mathsf{P}}}
\def\Rsf{{\mathsf{R}}}
\def\Usf{{\mathsf{U}}}
\def\Hsf{{\mathsf{H}}}
\def\R{\mathbb{R}}
\def\E{\mathbb{E}}
\def\Ncal{{\mathcal{N}}}
\def\Bcal{{\mathcal{B}}}
\begin{document}

\title{Block Coordinate Regularization by Denoising}

\author{Yu~Sun$^\ast$~\IEEEmembership{Student Member,~IEEE}, Jiaming~Liu$^\ast$~\IEEEmembership{Student Member,~IEEE}, \\
and Ulugbek~S.~Kamilov,~\IEEEmembership{Member,~IEEE}%
\thanks{This material is based upon work supported in part by NSF award CCF-1813910 and by NVIDIA Corporation with the donation of the Titan Xp GPU for research. This paper was presented at the 2019 33th Annual Conference on Neural Information Processing Systems (NeurIPS). }
\thanks{Y. Sun is with the Department of
Computer Science \& Engineering, Washington University in St.~Louis, MO 63130, USA.}
\thanks{J. Liu is with the Department of Electrical \& Systems Engineering, Washington University in St.~Louis, MO 63130, USA.}
\thanks{U.~S.~Kamilov (email:~kamilov@wustl.edu)
is with the Department of
Computer Science \& Engineering and the Department of Electrical \& Systems Engineering, Washington University in St.~Louis, MO 63130, USA.}
\thanks{$^\ast$ indicates equal contribution}}

\markboth{Block Coordinate Regularization by Denoising}%
{Sun,~Liu,~and~Kamilov}

\maketitle 

\ifCLASSOPTIONpeerreview
\begin{center} \bfseries EDICS: CIF-OBI, CIF-SBI, IMT-SIM\end{center}
\setcounter{page}{1}
\fi

\begin{abstract}
	We consider the problem of estimating a vector from its noisy measurements using a prior specified only through a denoising function. Recent work on plug-and-play priors (PnP) and regularization-by-denoising (RED) has shown the state-of-the-art performance of estimators under such priors in a range of imaging tasks. In this work, we develop a new block coordinate RED algorithm that decomposes a large-scale estimation problem into a sequence of updates over a small subset of the unknown variables. We theoretically analyze the convergence of the algorithm and discuss its relationship to the traditional proximal optimization. Our analysis complements and extends recent theoretical results for RED-based estimation methods. We numerically validate our method using several denoiser priors, including those based on convolutional neural network (CNN) denoisers.
\end{abstract}

\section{Introduction}

Problems involving estimation of an unknown vector $\xbm \in \R^n$ from a set of noisy measurements $\ybm \in \R^m$ are important in many areas, including computational imaging, machine learning, and compressive sensing. Consider the scenario in Fig.~\ref{Fig:Setting}, where a vector $\xbm \sim p_\xbm$ passes through the measurement channel $p_{\ybm | \xbm}$ to produce the measurement vector $\ybm$. When the estimation problem is ill-posed, it becomes essential to include the prior $p_\xbm$ in the estimation process. However, in high-dimensional settings, it is difficult to directly obtain the true prior $p_\xbm$ for certain signals  (such as natural images) and one is hence restricted to various indirect sources of prior information on $\xbm$. This paper considers the cases where the prior information on $\xbm$ is specified only via a denoising function, $\Dsf: \R^n \rightarrow \R^n$, designed for the removal of additive white Gaussian noise (AWGN).


\begin{figure}[t]
\centering\includegraphics[width=8.5cm]{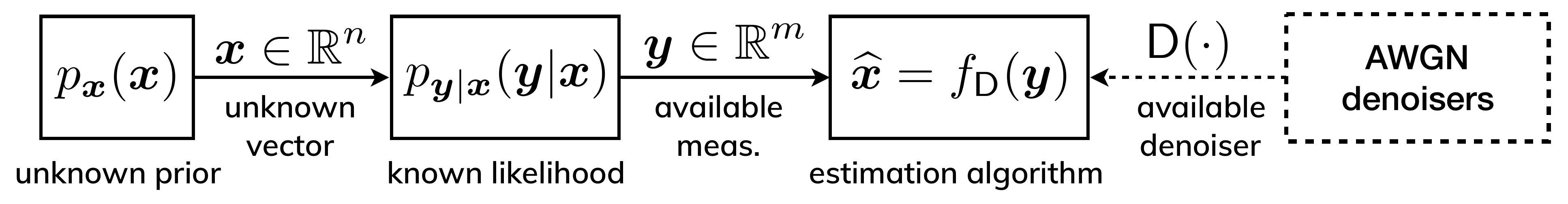}
\caption{The estimation problem considered in this work. The vector $\xbm \in \R^n$, with a prior $p_\xbm(\xbm)$, passes through the measurement channel $p_{\ybm|\xbm}(\ybm|\xbm)$ to result in the measurements $\ybm \in \R^m$. The estimation algorithm $f_\Dsf(\ybm)$ does not have a direct access to the prior, but can rely on a denoising function $\Dsf: \R^n \rightarrow \R^n$, specifically designed for the removal of additive white Gaussian noise (AWGN). We propose block coordinate RED as a scalable algorithm for obtaining $\xbm$ given $\ybm$ and $\Dsf$.}
\label{Fig:Setting}
\end{figure}


There has been considerable recent interest in leveraging denoisers as priors for the recovery of $\xbm$. One popular strategy, known as plug-and-play priors (PnP)~\cite{Venkatakrishnan.etal2013}, extends traditional proximal optimization~\cite{Parikh.Boyd2014} by replacing the proximal operator with a general off-the-shelf denoiser. It has been shown that the combination of proximal algorithms with advanced denoisers, such as BM3D~\cite{Dabov.etal2007} or DnCNN~\cite{Zhang.etal2017}, leads to the state-of-the-art performance for various imaging problems~\cite{Danielyan.etal2012, Chan.etal2016, Sreehari.etal2016, Ono2017, Kamilov.etal2017, Meinhardt.etal2017, Zhang.etal2017a, Buzzard.etal2017, Sun.etal2018a, Teodoro.etal2019, Ryu.etal2019}. A similar strategy has also been adopted in the context of a related class of algorithms known as approximate message passing (AMP)~\cite{Tan.etal2015, Metzler.etal2016, Metzler.etal2016a, Fletcher.etal2018}. Regularization-by-denoising (RED)~\cite{Romano.etal2017}, and the closely related deep mean-shift priors~\cite{Bigdeli.etal2017}, represent an alternative, in which the denoiser is used to specify an explicit regularizer that has a simple gradient. More recent work has clarified the existence of explicit RED regularizers~\cite{Reehorst.Schniter2019}, demonstrated its excellent performance on phase retrieval~\cite{Metzler.etal2018}, and further boosted its performance in combination with a deep image prior~\cite{Mataev.etal2019}. In short, the use of advanced denoisers has proven to be essential for achieving the state-of-the-art results in many contexts. However, solving the corresponding estimation problem is still a significant computational challenge, especially in the context of high-dimensional vectors $\xbm$, typical in modern applications.

In this work, we extend the  current family of RED algorithms by introducing a new \emph{block coordinate RED (BC-RED)} algorithm. The algorithm relies on random partial updates on $\xbm$, which makes it scalable to vectors that would otherwise be prohibitively large for direct processing. Additionally, as we shall see, the overall computational complexity of BC-RED can sometimes be lower than corresponding methods operating on the full vector. This behavior is consistent with the traditional coordinate descent methods that can outperform their full gradient counterparts by being able to better reuse local updates and take larger steps~\cite{Tseng2001, Nesterov2012, Beck.Tetruashvili2013, Wright2015, Fercoq.Gramfort2018}. We present two theoretical results related to BC-RED. We first theoretically characterize the convergence of the algorithm under a set of transparent assumptions on the data-fidelity and the denoiser. Our analysis complements the recent theoretical analysis of full-gradient RED algorithms in~\cite{Reehorst.Schniter2019} by considering block-coordinate updates and establishing the explicit worst-case convergence rate. Our second result establishes backward compatibility of BC-RED with the traditional proximal optimization. We show that when the denoiser corresponds to a proximal operator, BC-RED can be interpreted as an approximate MAP estimator, whose approximation error can be made arbitrarily small. To the best of our knowledge, this explicit link with proximal optimization is missing in the current literature on RED. BC-RED thus provides a flexible, scalable, and theoretically sound algorithm applicable to a wide variety of large-scale estimation problems. We demonstrate BC-RED on image recovery from linear measurements using several denoising priors, including those based on convolutional neural network (CNN) denoisers. A preliminary version of this work has appeared in~\cite{Sun.etal2019b}. The current paper contains all the proofs, more detailed descriptions and additional simulations.


\section{Background}

It is common to formulate the estimation in Figure~\ref{Fig:Setting} as an optimization problem
\begin{equation}
\label{Eq:RegMin}
\xbmhat = \argmin_{\xbm \in \R^n} f(\xbm) \quad\textrm{with}\quad f(\xbm) = g(\xbm) + h(\xbm),
\end{equation}
where $g$ is the data-fidelity term and $h$ is the regularizer. For example, the maximum a posteriori probability (MAP) estimator is obtained by setting
$$g(\xbm) = -\log(p_{\ybm|\xbm}(\ybm | \xbm)) \quad\textrm{and}\quad h(\xbm) = -\log(p_{\xbm}(\xbm)),$$
where $p_{\ybm|\xbm}$ is the likelihood that depends on $\ybm$ and $p_\xbm$ is the prior. One of the most popular data-fidelity terms is least-squares $g(\xbm) = \frac{1}{2}\|\ybm-\Abm\xbm\|_2^2$, which assumes a linear measurement model under AWGN. Similarly, one of the most popular regularizers is based on a sparsity-promoting penalty $h(\xbm) = \tau \|\Dbm\xbm\|_1$, where $\Dbm$ is a linear transform and $\tau > 0$ is the regularization parameter~\cite{Rudin.etal1992, Tibshirani1996, Candes.etal2006, Donoho2006}.

Many widely used regularizers, including the ones based on the $\ell_1$-norm, are nondifferentiable. Proximal algorithms~\cite{Parikh.Boyd2014}, such as the proximal-gradient method (PGM)~\cite{Figueiredo.Nowak2003, Daubechies.etal2004, Bect.etal2004, Beck.Teboulle2009} and alternating direction method of multipliers (ADMM)~\cite{Eckstein.Bertsekas1992, Afonso.etal2010, Ng.etal2010, Boyd.etal2011}, are a class of optimization methods that can circumvent the need to differentiate nonsmooth regularizers by using the proximal operator
\begin{equation}
\label{Eq:ProximalOperator}
\prox_{\mu h}(\zbm) \defn \argmin_{\xbm \in \R^n} \left\{\frac{1}{2}\|\xbm-\zbm\|_2^2 + \mu h(\xbm)\right\},\; \mu > 0.
\end{equation}
The observation that the proximal operator can be interpreted as the MAP denoiser for AWGN has prompted the development of PnP~\cite{Venkatakrishnan.etal2013}, where the proximal operator $\prox_{\mu h}(\cdot)$, within ADMM or PGM, is replaced with a more general denoising function $\Dsf(\cdot)$.

Consider the following alternative to PnP that also relies on a denoising function~\cite{Bigdeli.etal2017, Romano.etal2017}
\begin{align}
\label{Eq:REDGM}
&\xbm^t \leftarrow \xbm^{t-1} - \gamma \left(\nabla g(\xbm^{t-1})+\Hsf(\xbm^{t-1})\right) \nonumber\\ 
&\text{where}\quad \Hsf(\xbm) \defn \tau(\xbm - \Dsf(\xbm)), \quad \tau >0.
\end{align}
Under some conditions on the denoiser, it is possible to relate $\Hsf(\cdot)$ in~\eqref{Eq:REDGM} to some explicit regularization function $h$. For example, when the denoiser is locally homogeneous and has a symmetric Jacobian~\cite{Romano.etal2017, Reehorst.Schniter2019}, the operator $\Hsf(\cdot)$ corresponds to the gradient of the following function
\begin{equation}
\label{Eq:REDReg}
h(\xbm) = \frac{\tau}{2}\xbm^\Tsf(\xbm-\Dsf(\xbm)).
\end{equation}
On the other hand, when the denoiser corresponds to the minimum mean squared error (MMSE) estimator $\Dsf(\zbm) = \E[\xbm | \zbm]$ for the AWGN denoising problem~\cite{Bigdeli.etal2017, Reehorst.Schniter2019}, $\zbm = \xbm + \ebm$, with $\xbm \sim p_\xbm(\xbm)$ and $\ebm \sim \Ncal(\zerobm, \sigma^2\Ibm)$, the operator $\Hsf(\cdot)$ corresponds to the gradient of
\begin{equation}
\label{Eq:DMSP}
h(\xbm) = -\tau\sigma^2 \log(p_\zbm(\xbm)),
\end{equation}
where
\begin{equation}
p_\zbm(\xbm) = (p_\xbm \ast p_\ebm)(\xbm) = \int_{\R^n} p_\xbm(\zbm) \phi_\sigma(\xbm-\zbm) \dsf \zbm, \nonumber
\end{equation}
where $\phi_\sigma$ is the Gaussian probability density function of variance $\sigma^2$ and $\ast$ denotes convolution. In this paper, we will use the term RED to denote \emph{all} methods seeking the fixed points of~\eqref{Eq:REDGM}. The key benefits of the RED methods~\cite{Romano.etal2017, Bigdeli.etal2017, Metzler.etal2018, Reehorst.Schniter2019, Mataev.etal2019} are their explicit separation of the forward model from the prior, their ability to accommodate powerful denoisers (such as the ones based on CNNs) without differentiating them, and their state-of-the-art performance on a number of imaging tasks. The next section further extends the scalability of RED by designing a new block coordinate RED algorithm.

\section{Block Coordinate RED}
\label{Sec:Algorithm}

All the current RED algorithms operate on vectors in $\R^n$. We propose BC-RED, shown in Algorithm~\ref{Alg:BCRED}, to allow for partial randomized updates on $\xbm$. Consider the decomposition of $\R^n$ into $b \geq 1$ subspaces
$$\R^n = \R^{n_1} \times \R^{n_2} \times \cdots \times \R^{n_b}\quad\text{with}\quad n = n_1 + n_2 + \cdots + n_b.$$
For each $i \in \{1, \dots, b\}$, we define the matrix $\Usf_i: \R^{n_i} \rightarrow \R^n$ that injects a vector in $\R^{n_i}$ into $\R^n$ and its transpose $\Usf_i^\Tsf$ that extracts the $i$th block from a vector in $\R^n$. Then, for any $\xbm = (\xbm_1, \dots, \xbm_b) \in \R^n$
\begin{equation}
\label{Eq:SubspaceDecomposition}
\xbm = \sum_{i = 1}^b \Usf_i \xbm_i \quad\text{with}\quad \xbm_i = \Usf_i^\Tsf\xbm \in \R^{n_i}, \; i = 1, \dots, b
\end{equation}
which is equivalent to $\sum_{i = 1}^b \Usf_i \Usf_i^\Tsf = \Isf$.
Note that~\eqref{Eq:SubspaceDecomposition} directly implies the norm preservation
$\|\xbm\|_2^2 = \|\xbm_1\|_2^2 + \cdots + \|\xbm_b\|_2^2$ for any $\xbm \in \R^n$.
We are interested in a block-coordinate algorithm that uses only a subset of operator outputs corresponding to coordinates in some block $i \in \{1, \dots, b\}$. Hence, for an operator $\Gsf: \R^n \rightarrow \R^n$, we define the block-coordinate operator $\Gsf_i: \R^n \rightarrow \R^{n_i}$ as
\begin{equation}
\Gsf_i(\xbm) \defn [\Gsf(\xbm)]_i = \Usf_i^\Tsf\Gsf(\xbm) \in \R^{n_i}, \quad \xbm \in \R^n.
\end{equation}
We now introduce the proposed BC-RED algorithm summarized in Algorithm~\ref{Alg:BCRED}. Note that when $b = 1$, we have $n = n_1$ and $\Usf_1 = \Usf_1^\Tsf = \Isf$. Hence, the theoretical analysis in this paper is also applicable to the full-gradient RED algorithm in~\eqref{Eq:REDGM}.
\begin{algorithm}[t]
\caption{Block Coordinate Regularization by Denoising (BC-RED)}
\label{Alg:BCRED}
\begin{algorithmic}[1]
\STATE \textbf{input: } initial value $\xbm^0 \in \R^n$, parameter $\tau > 0$, and step-size $\gamma > 0$.
\FOR{$k = 1, 2, 3, \dots$}
\STATE Choose an index $i_k \in \{1, \dots, b\}$
\STATE $\xbm^k \leftarrow \xbm^{k-1} - \gamma \Usf_{i_k} \Gsf_{i_k}(\xbm^{k-1})$\\ 
\quad where  $\Gsf_i(\xbm) \defn \Usf_i^\Tsf\Gsf(\xbm)$ with $\Gsf(\xbm) \defn \nabla g(\xbm) + \tau(\xbm-\Dsf(\xbm))$.
\ENDFOR
\end{algorithmic}
\end{algorithm} 

As with traditional coordinate descent methods (see~\cite{Wright2015} for a review), BC-RED can be implemented using different block selection strategies. The strategy adopted for our theoretical analysis selects block indices $i_k$ as i.i.d.~random variables distributed uniformly over $\{1, \dots, b\}$. An alternative is to proceed in epochs of $b$ consecutive iterations, where at the start of each epoch the set $\{1, \dots, b\}$ is reshuffled, and $i_k$ is then selected consecutively from this ordered set. We numerically compare the convergence of both BC-RED variants in Section~\ref{Sec:Simulations}.

BC-RED updates its iterates one randomly picked block at a time using the output of $\Gsf$. When the algorithm converges, it converges to the vectors in the zero set of $\Gsf$
\begin{align}
&\Gsf(\xbmast) = \nabla g(\xbmast) + \tau(\xbmast - \Dsf(\xbmast)) = \zerobm \nonumber\\
&\quad\Leftrightarrow\quad \xbmast \in \zer(\Gsf) \defn \{\xbm \in \R^n : \Gsf(\xbm) = \zerobm\}.
\end{align}
Consider the following two sets
\begin{align}
&\zer(\nabla g) \defn \{\xbm \in \R^n : \nabla g(\xbm) = \zerobm\} \nonumber\\
\text{and}\quad& \fix(\Dsf) \defn \{\xbm \in \R^n : \xbm = \Dsf(\xbm)\},
\end{align}
where $\zer(\nabla g)$ is the set of all critical points of the data-fidelity and $\fix(\Dsf)$ is the set of all fixed points of the denoiser. Intuitively, the fixed points of $\Dsf$ correspond to all the vectors that are not denoised, and therefore can be interpreted as vectors that are \emph{noise-free} according to the denoiser. 

Note that if $\xbmast \in \zer(\nabla g)\cap\fix(\Dsf)$, then $\Gsf(\xbmast) = \zerobm$ and $\xbmast$ is one of the solutions of BC-RED. Hence, any vector that is consistent with the data for a convex $g$ and noiseless according to $\Dsf$ is in the solution set. On the other hand, when $\zer(\nabla g)\cap \fix(\Dsf) = \varnothing$, then $\xbmast \in \zer(\Gsf)$ corresponds to a tradeoff between the two sets, explicitly controlled via $\tau > 0$ (see Fig.~\ref{Fig:imageFlow} in the supplement for an illustration). This explicit control is one of the key differences between RED and PnP.

BC-RED benefits from considerable \emph{flexibility} compared to the full-gradient RED. Since each update is restricted to only one block of $\xbm$, the algorithm is suitable for parallel implementations and can deal with problems where the vector $\xbm$ is distributed in space and in time. However, the maximal benefit of BC-RED is achieved when $\Gsf_i$ is efficient to evaluate. Fortunately, it was systematically shown in~\cite{Peng.etal2016} that many operators---common in machine learning, image processing, and compressive sensing---admit \emph{coordinate friendly} updates. 

For a specific example, consider the least-squares data-fidelity $g$ and a block-wise denoiser $\Dsf$. Define the residual vector $r(\xbm) \defn \Abm\xbm-\ybm$ and consider a single iteration of BC-RED that produces $\xbm^+$ by updating the $i$th block of $\xbm$. Then, the update direction and the residual update can be computed as
\begin{align}
&\Gsf_i(\xbm) = \Abm_i^\Tsf r(\xbm) + \tau (\xbm_i - \Dsf(\xbm_i)) \nonumber\\
\text{and}\quad& r(\xbm^+) = r(\xbm) - \gamma \Abm_i \Gsf_i(\xbm),
\end{align}
where $\Abm_i \in \R^{m \times n_i}$ is a submatrix of $\Abm$ consisting of the columns corresponding to the $i$th block. In many problems of practical interest~\cite{Peng.etal2016}, the complexity of working with $\Abm_i$ is roughly $b$ times lower than with $\Abm$. Also, many advanced denoisers can be effectively applied on image patches rather than on the full image~\cite{Elad.Aharon2006, Buades.etal2010, Zoran.Weiss2011}. Therefore, in such settings, the speed of $b$ iterations of BC-RED is expected to be at least comparable to a single iteration of the full-gradient RED (see also Section~\ref{Sec:ComputationalComplexity}).

\section{Convergence Analysis and Compatibility with Proximal Optimization}
\label{Sec:TheoretcalResults}

In this section, we present two theoretical results related to BC-RED. We first establish its convergence to an element of $\zer(\Gsf)$ and then discuss its compatibility with the theory of proximal optimization.

\subsection{Fixed Point Convergence of BC-RED}

Our analysis requires three assumptions that together serve as sufficient conditions for convergence.

\begin{assumption}
\label{As:NonemptySet}
The operator $\Gsf$ is such that $\zer(\Gsf) \neq \varnothing$. There is a finite number $R_0$ such that the distance of the initial $\xbm^0 \in \R^n$ to the farthest element of $\zer(\Gsf)$ is bounded, that is
$$\max_{\xbmast \in \zer(\Gsf)} \|\xbm^0-\xbmast\|_2 \leq R_0.$$
\end{assumption}

This assumption is necessary to guarantee convergence and is related to the existence of the minimizers in the literature on traditional coordinate minimization~\cite{Tseng2001, Nesterov2012, Beck.Tetruashvili2013, Wright2015}.

The next two assumptions rely on Lipschitz constants along directions specified by specific blocks. We say that $\Gsf_i$ is \emph{block Lipschitz continuous} with constant $\lambda_i > 0$ if
\begin{align}
&\|\Gsf_i(\xbm)-\Gsf_i(\ybm)\|_2 \leq \lambda_i \|\hbm_i\|_2, \nonumber\\
\text{where}\quad& \xbm = \ybm + \Usf_i\hbm_i, \; \ybm \in \R^n, \hbm_i \in \R^{n_i}.
\end{align}
When $\lambda_i = 1$, we say that $\Gsf_i$ is \emph{block nonexpansive}. Note that if an operator $\Gsf$ is globally $\lambda$-Lipschitz continuous, then it is straightforward to see that each $\Gsf_i = \Usf_i^\Tsf\Gsf$ is also block $\lambda$-Lipschitz continuous.

\begin{assumption}
\label{As:DataFitConvexity}
The function $g$ is continuously differentiable and convex. Additionally, for each $i \in \{1, \dots, b\}$ the block gradient $\nabla_i g$ is block Lipschitz continuous with constant $L_i > 0$. We define the largest block Lipschitz constant as
$\Lmax \defn \max\{L_1, \dots, L_b\}.$
\end{assumption}

Let $L > 0$ denote the global Lipschitz constant of $\nabla g$. We always have $\Lmax \leq L$ and, for some $g$, it may even happen that $\Lmax = L/b$~\cite{Wright2015}. As we shall see, the largest possible step-size $\gamma$ of BC-RED depends on $\Lmax$, while that of the full-gradient RED on $L$. Hence, one natural advantage of BC-RED is that it can often take more aggressive steps compared to the full-gradient RED.

\begin{assumption}
\label{As:NonexpansiveDen}
The denoiser $\Dsf$ is such that each block denoiser $\Dsf_i$ is block nonexpansive.
\end{assumption}

Since the proximal operator is nonexpansive~\cite{Parikh.Boyd2014}, it automatically satisfies this assumption. We revisit this scenario in a greater depth in Section~\ref{Sec:ProximalConvergence}. We can now establish the following result for BC-RED.

\medskip
\begin{theorem}
\label{Thm:ConvThm1}
Run BC-RED for $t \geq 1$ iterations with random i.i.d.~block selection under Assumptions~\ref{As:NonemptySet}-\ref{As:NonexpansiveDen} using a fixed step-size $0 < \gamma \leq 1/(\Lmax+2\tau)$. Then, we have
\begin{align}
\label{Eq:BCREDConv}
\E &\left[\min_{k \in \{1, \dots, t\}} \|\Gsf(\xbm^{k-1})\|_2^2\right] \nonumber\\
&\leq \E\left[\frac{1}{t}\sum_{k = 1}^t \|\Gsf(\xbm^{k-1})\|_2^2\right]
\leq \frac{b(\Lmax+2\tau)}{\gamma t}R_0^2.
\end{align}
\end{theorem}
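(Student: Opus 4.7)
The plan is to analyze the iterates via the Lyapunov function $V(\xbm) \defn \|\xbm-\xbmast\|_2^2$, with $\xbmast\in\zer(\Gsf)$ provided by Assumption~\ref{As:NonemptySet}, deriving a one-step expected descent and then telescoping.

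\textbf{Block regularity of $\Gsf$.} Under Assumptions~\ref{As:DataFitConvexity}--\ref{As:NonexpansiveDen}, each block operator $\Gsf_i = \nabla_i g + \tau(\Usf_i^\Tsf - \Dsf_i)$ is (i) block $(L_i+2\tau)$-Lipschitz and (ii) block $(L_i+2\tau)^{-1}$-cocoercive. Property (i) follows from the triangle inequality: $\nabla_i g$ contributes $L_i$ from Assumption~\ref{As:DataFitConvexity}, while $\|\hbm_i - (\Dsf_i(\xbm)-\Dsf_i(\ybm))\|_2 \leq 2\|\hbm_i\|_2$ by block nonexpansiveness gives the $2\tau$. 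For (ii), Baillon--Haddad applied block-wise yields $(1/L_i)$-cocoercivity of $\nabla_i g$; a one-line calculation using $\|\Dsf_i(\xbm)-\Dsf_i(\ybm)\|_2\leq\|\hbm_i\|_2$ shows $(\Usf_i^\Tsf-\Dsf_i)$ is block $(1/2)$-cocoercive, so $\tau(\Usf_i^\Tsf-\Dsf_i)$ is block $(1/(2\tau))$-cocoercive; combining via the elementary inequality $\|u\|_2^2/\alpha+\|v\|_2^2/\beta\geq\|u+v\|_2^2/(\alpha+\beta)$ yields the stated constant.

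\textbf{One-step expansion.} Since BC-RED modifies only block $i_k$, direct expansion of the squared norm gives
\begin{equation*}
\|\xbm^k-\xbmast\|_2^2 = \|\xbm^{k-1}-\xbmast\|_2^2 - 2\gamma\langle(\xbm^{k-1}-\xbmast)_{i_k},\Gsf_{i_k}(\xbm^{k-1})\rangle + \gamma^2\|\Gsf_{i_k}(\xbm^{k-1})\|_2^2.
\end{equation*}
Taking the conditional expectation over $i_k$ uniform on $\{1,\dots,b\}$ and using $\sum_i\|\Gsf_i(\xbm)\|_2^2=\|\Gsf(\xbm)\|_2^2$,
\begin{equation*}
\E[\|\xbm^k-\xbmast\|_2^2\,|\,\xbm^{k-1}] = \|\xbm^{k-1}-\xbmast\|_2^2 - \tfrac{2\gamma}{b}\langle\xbm^{k-1}-\xbmast,\Gsf(\xbm^{k-1})\rangle + \tfrac{\gamma^2}{b}\|\Gsf(\xbm^{k-1})\|_2^2.
\end{equation*}
The central step is then to lower-bound the cross term by $(\Lmax+2\tau)^{-1}\|\Gsf(\xbm^{k-1})\|_2^2$, which combined with $\gamma\leq 1/(\Lmax+2\tau)$ absorbs the positive quadratic residual and produces the per-step decrease $-\tfrac{\gamma}{b(\Lmax+2\tau)}\|\Gsf(\xbm^{k-1})\|_2^2$. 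I would derive this cross-term bound by leveraging the block cocoercivity of each $\Gsf_i$ along the fiber through $\xbm^{k-1}$ where only block $i$ varies, together with $\Gsf_i(\xbmast)=0$, and summing over blocks using $1/(L_i+2\tau)\geq 1/(\Lmax+2\tau)$.

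\textbf{Telescoping and main obstacle.} Taking the full expectation, summing the per-step descent from $k=1$ to $t$, dropping the nonnegative $\E\|\xbm^t-\xbmast\|_2^2$ tail, and using $\|\xbm^0-\xbmast\|_2^2\leq R_0^2$ via Assumption~\ref{As:NonemptySet} yields $\sum_{k=1}^t\E\|\Gsf(\xbm^{k-1})\|_2^2 \leq b(\Lmax+2\tau)R_0^2/\gamma$; dividing by $t$ gives the bound on the running average, and bounding the minimum by the average completes the argument. The hard part will be the cross-term bound with $\Lmax$ rather than a worse global constant: because Assumption~\ref{As:NonexpansiveDen} only provides block (not global) nonexpansiveness of $\Dsf$, $\Gsf$ need not be globally Lipschitz, so one cannot simply invoke a global cocoercivity of $\Gsf$ and must instead extract the descent strictly from the purely block-wise structure together with the step-size restriction $\gamma\leq 1/(\Lmax+2\tau)$.
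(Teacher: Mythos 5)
Your proposal follows the paper's proof essentially step for step: establish block $1/(\Lmax+2\tau)$-cocoercivity of each $\Gsf_i$ by combining the $(1/\Lmax)$-cocoercivity of $\nabla_i g$ (Baillon--Haddad) with the $(1/(2\tau))$-cocoercivity of $\tau(\Usf_i^\Tsf-\Dsf_i)$, expand $\|\xbm^k-\xbmast\|_2^2$ over one block update, use that cocoercivity together with $\Gsf_i(\xbmast)=\zerobm$ and $\gamma\leq 1/(\Lmax+2\tau)$ to get a per-step expected decrease of $\tfrac{\gamma}{b(\Lmax+2\tau)}\|\Gsf(\xbm^{k-1})\|_2^2$, then telescope and bound the minimum by the average. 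The only cosmetic differences are that the paper obtains the combined cocoercivity constant by writing $\Usf_i^\Tsf-\tfrac{2}{\Lmax+2\tau}\Gsf_i$ as a convex combination of block-nonexpansive operators rather than via your parallel-sum inequality, and it applies the cross-term bound before (rather than after) averaging over the block index.
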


A proof of the theorem is provided in the supplement. Theorem~\ref{Thm:ConvThm1} establishes the fixed-point convergence of BC-RED in expectation to $\zer(\Gsf)$ with $O(1/t)$ rate. The proof relies on the monotone operator theory~\cite{Bauschke.Combettes2017, Ryu.Boyd2016}, widely used in the context of convex optimization~\cite{Parikh.Boyd2014}, including in the unified analysis of various traditional coordinate descent algorithms~\cite{Peng.etal2016a, Chow.etal2017}. Note that the theorem does \emph{not} assume the existence of any regularizer $h$, which makes it applicable to denoisers beyond those characterized with explicit functions in~\eqref{Eq:REDReg} and~\eqref{Eq:DMSP}.

Since $\Lmax \leq L$, one important implication of Theorem~\ref{Thm:ConvThm1}, is that the worst-case convergence rate (in expectation) of $b$ iterations of BC-RED is better than that of a single iteration of the full-gradient RED (to see this, note that the full-gradient rate is obtained by setting $b = 1$, $\Lmax = L$, and removing the expectation in~\eqref{Eq:BCREDConv}). This implies that in \emph{coordinate friendly settings} (as discussed at the end of Section~\ref{Sec:Algorithm}), the overall computational complexity of BC-RED can be lower than that of the full-gradient RED. This gain is primarily due to two factors: (a) possibility to pick a larger step-size $\gamma = 1/(\Lmax+2\tau)$; (b) immediate reuse of each local block-update when computing the next iterate (the full-gradient RED updates the full vector before computing the next iterate).

In the special case of $\Dsf(\xbm) = \xbm - (1/\tau)\nabla h(\xbm)$, for some convex function $h$, BC-RED reduces to the traditional coordinate descent method applied to~\eqref{Eq:RegMin}. Hence, under the assumptions of Theorem~\ref{Thm:ConvThm1}, one can rely on the analysis of traditional randomized coordinate descent methods in~\cite{Wright2015} to obtain
\begin{equation}
\label{Eq:CordDesConv}
\E\left[f(\xbm^t)\right] - f^\ast \leq \frac{2b}{\gamma t}R_0^2
\end{equation}
where $f^\ast$ is the minimum value in~\eqref{Eq:RegMin}. A proof of~\eqref{Eq:CordDesConv} is provided in the supplement for completeness. Therefore, such denoisers lead to explicit convex RED regularizers and $O(1/t)$ convergence of BC-RED in terms of the objective. However, as discussed in Section~\ref{Sec:ProximalConvergence}, when the denoiser is a proximal operator of some convex $h$, BC-RED is \emph{not} directly solving~\eqref{Eq:RegMin}, but rather its approximation.

Finally, note that the analysis in Theorem~\ref{Thm:ConvThm1} only provides \emph{sufficient conditions} for the convergence of BC-RED. As corroborated by our numerical studies in Section~\ref{Sec:Simulations}, the actual convergence of BC-RED is more general and often holds beyond nonexpansive denoisers. One plausible explanation for this is that such denoisers are \emph{locally nonexpansive} over the set of input vectors used in testing. On the other hand, the recent techniques for spectral-normalization of CNNs~\cite{Miyato.etal2018, Sedghi.etal2019, Gouk.etal2018} provide a convenient tool for building \emph{globally nonexpansive} neural denoisers that result in provable convergence of BC-RED.

\subsection{Convergence for Proximal Operators}
\label{Sec:ProximalConvergence}
One of the limitations of the current RED theory is in its limited backward compatibility with the theory of proximal optimization.  For example, as discussed in~\cite{Romano.etal2017} (see section \emph{``Can we mimic any prior?''}), the popular total variation (TV) denoiser~\cite{Rudin.etal1992} cannot be justified with the original RED regularization function~\eqref{Eq:REDReg}. In this section, we show that BC-RED (and hence also the full-gradient RED) can be used to solve~\eqref{Eq:RegMin} for any convex, closed, and proper function $h$. We do this by establishing a formal link between RED and the concept of Moreau smoothing, widely used in nonsmooth optimization~\cite{Moreau1965, Rockafellar.Wets1998, Yu2013}. In particular, we consider the following proximal-operator denoiser
\begin{align}
\label{Eq:ProximalDenoiser}
\Dsf(\zbm) = \prox_{\frac{1}{\tau} h}(\zbm) = \argmin_{\xbm \in \R^n}\left\{\frac{1}{2}\|\xbm-\zbm\|_2^2 + (1/\tau) h(\xbm)\right\},
\end{align}
where $\tau>0$, $\zbm\in\R^n$, and $h$ is a closed, proper, and convex function~\cite{Parikh.Boyd2014}. Since the proximal operator is nonexpansive, it is also block nonexpansive, which means that Assumption~\ref{As:NonexpansiveDen} is automatically satisfied. Our analysis, however, requires an additional assumption using the constant $R_0$ defined in Assumption~\ref{As:NonemptySet}. 

\begin{assumption}
\label{As:Subgradient}
There is a finite number $G_0$ that bounds the largest subgradient of $h$, that is
$$\max\{\|\gbm(\xbm)\|_2 : \gbm(\xbm) \in \partial h(\xbm), \xbm \in \Bcal(\xbm^0, R_0)\} \leq G_0,$$
where $\Bcal(\xbm^0, R_0) \defn \{\xbm \in \R^n : \|\xbm-\xbm^0\|_2 \leq R_0\}$ denotes a ball of radius $R_0$, centered at $\xbm^0$.
\end{assumption}

This assumption on boundedness of the subgradients holds for a large number of regularizers used in practice, including both TV and the $\ell_1$-norm penalties. We can now establish the following result.

\medskip
\begin{theorem}
\label{Thm:ProxConv}
Run BC-RED for $t \geq 1$ iterations with random i.i.d.~block selection and the denoiser~\eqref{Eq:ProximalDenoiser} under Assumptions~\ref{As:NonemptySet}-\ref{As:Subgradient} using a fixed step-size $0 < \gamma \leq 1/(\Lmax+2\tau)$. Then, we have
\begin{equation}
\label{Eq:BCREDProx}
\E\left[f(\xbm^t)\right] - f^\ast \leq \frac{2b}{\gamma t} R_0^2 + \frac{G_0^2}{2\tau},
\end{equation}
where the function $f$ is defined in~\eqref{Eq:RegMin} and $f^\ast$ is its minimum.
\end{theorem}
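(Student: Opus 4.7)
The plan is to recognize that, under the proximal-denoiser choice~\eqref{Eq:ProximalDenoiser}, the BC-RED update operator $\Gsf$ is the gradient of a smooth convex Moreau-smoothed surrogate of $f$, so that the objective-value rate~\eqref{Eq:CordDesConv} applies directly to this surrogate; the gap between $f$ and its surrogate is then quantified by the standard Moreau-envelope approximation bound.

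First, I would unwrap the denoiser in gradient form. For a closed, proper, convex $h$, the Moreau envelope $h^{1/\tau}(\xbm) \defn \min_{\ybm}\{h(\ybm) + (\tau/2)\|\ybm - \xbm\|_2^2\}$ is convex, continuously differentiable with $\tau$-Lipschitz gradient, and satisfies $\nabla h^{1/\tau}(\xbm) = \tau(\xbm - \prox_{(1/\tau)h}(\xbm))$. Plugging this identity into the definition of $\Gsf$ yields $\Gsf(\xbm) = \nabla g(\xbm) + \nabla h^{1/\tau}(\xbm) = \nabla \widetilde f(\xbm)$ with $\widetilde f \defn g + h^{1/\tau}$. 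Since $\nabla h^{1/\tau}$ is globally $\tau$-Lipschitz, the block-Lipschitz constants of $\nabla \widetilde f$ are bounded by $L_i + \tau \leq \Lmax + 2\tau$, so the step-size condition of Theorem~\ref{Thm:ConvThm1} is compatible with the one required for standard coordinate descent on $\widetilde f$, and $\zer(\Gsf) = \argmin \widetilde f$ so that the same $R_0$ is applicable.

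Second, I would apply the objective-value rate~\eqref{Eq:CordDesConv} with $h$ replaced by $h^{1/\tau}$, which is convex and smooth and therefore fits the premise leading to~\eqref{Eq:CordDesConv}. With the same random i.i.d.\ block selection and the same step size, this gives
\[
\E\!\left[\widetilde f(\xbm^t)\right] - \widetilde f^\ast \;\leq\; \frac{2b}{\gamma t}\, R_0^2,
\]
where $\widetilde f^\ast \defn \inf \widetilde f$. Third, I would transfer this bound from $\widetilde f$ to $f$ via the two-sided Moreau-envelope sandwich. The lower bound $h^{1/\tau} \leq h$ is immediate by substituting $\ybm = \xbm$ in the infimum, giving $\widetilde f \leq f$ pointwise and $\widetilde f^\ast \leq f^\ast$. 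The reverse direction follows from a short subgradient computation at the prox point: for any $\gbm(\xbm) \in \partial h(\xbm)$ one obtains the standard estimate
\[
h(\xbm) - h^{1/\tau}(\xbm) \;\leq\; \frac{\|\gbm(\xbm)\|_2^2}{2\tau} \;\leq\; \frac{G_0^2}{2\tau},
\]
valid whenever $\xbm$ admits a subgradient of norm at most $G_0$. Combining gives $f(\xbm^t) - f^\ast \leq (\widetilde f(\xbm^t) - \widetilde f^\ast) + G_0^2/(2\tau)$, and the claimed bound follows upon taking expectations.

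The main obstacle is bookkeeping around Assumption~\ref{As:Subgradient}: the subgradient bound $G_0$ is only postulated on $\Bcal(\xbm^0, R_0)$, so one must check that the iterates $\xbm^t$ remain in a region where this bound is usable. The stochastic Fej\'er-monotonicity of BC-RED with respect to $\zer(\Gsf)$ (already extracted inside the proof of Theorem~\ref{Thm:ConvThm1} from the block nonexpansiveness of $\Isf - \gamma\Gsf$) controls $\|\xbm^t - \xbmast\|_2$ for a reference zero $\xbmast$, which together with Assumption~\ref{As:NonemptySet} keeps the iterates in a neighborhood of $\xbm^0$ of bounded radius, so the envelope bound can be applied iterate-wise (with at most an absolute-constant dilation that can be absorbed into $R_0$). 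Apart from this containment step, the rest of the argument is a clean combination of the Moreau-envelope identity, the randomized coordinate descent rate~\eqref{Eq:CordDesConv}, and the envelope's approximation guarantee.
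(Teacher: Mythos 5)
Your proposal is correct and follows essentially the same route as the paper's proof: identify $\Gsf$ as the gradient of the Moreau-smoothed surrogate $g+\tau h_{(1/\tau)}$ via Proposition~\ref{Prop:GradMorProxRes}, apply the randomized coordinate descent rate~\eqref{Eq:CordDesConv} to that surrogate, and transfer back to $f$ using the envelope sandwich of Proposition~\ref{Prop:UniformBoundMoreau}. Your remark about verifying that the iterates stay where Assumption~\ref{As:Subgradient} applies (they remain in $\Bcal(\xbm^0,2R_0)$ by~\eqref{Eq:DistanceReduction}, not $\Bcal(\xbm^0,R_0)$) is a point the paper's proof passes over silently, and your suggested fix of dilating the ball in the assumption is the right one.
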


The theorem is proved in the supplement. It establishes that BC-RED in expectation \emph{approximates} the solution of~\eqref{Eq:RegMin} with an error bounded by $(G_0^2/(2\tau))$. For example, by setting $\tau = \sqrt{t}$ and ${\gamma = 1/(\Lmax+2\sqrt{t})}$, one obtains the following bound
\begin{equation}
\label{Eq:BCREDProx2}
\E\left[f(\xbm^t)\right] - f^\ast \leq \frac{1}{\sqrt{t}}\left[2b(\Lmax+2)R_0^2 + G_0^2\right].
\end{equation}

When $h(\xbm) = -\log(p_\xbm(\xbm))$, the proximal operator corresponds to the MAP denoiser, and the solution of BC-RED corresponds to an \emph{approximate} MAP estimator. This approximation can be made as precise as desired by considering larger values for the parameter $\tau > 0$. Note that this further justifies the RED framework by establishing that it can be used to compute a minimizer of any proper, closed, and convex (but not necessarily differentiable) $h$. Therefore, our analysis strengthens RED by showing that it can accommodate a much larger class of explicit regularization functions, beyond those characterized in~\eqref{Eq:REDReg} and~\eqref{Eq:DMSP}.

\section{Numerical Validation}
\label{Sec:Simulations}

There is a considerable recent interest in using advanced priors in the context of image recovery from underdetermined ($m < n$) and noisy measurements. Recent work~\cite{Romano.etal2017, Bigdeli.etal2017, Reehorst.Schniter2019, Metzler.etal2018, Mataev.etal2019} suggests significant performance improvements due to advanced denoisers (such as BM3D~\cite{Dabov.etal2007} or DnCNN~\cite{Zhang.etal2017}) over traditional sparsity-driven priors (such as TV~\cite{Rudin.etal1992}). Our goal is to complement these studies with several simulations validating our theoretical analysis and providing additional insights into {BC-RED}. The code for our implementation of BC-RED is available through the following link\footnote{\url{https://github.com/wustl-cig/bcred}}.

We consider inverse problems of form $\ybm = \Abm\xbm + \ebm,$
where ${\ebm \in \R^m}$ is an AWGN vector and ${\Abm \in \R^{m \times n}}$ is a matrix corresponding to either a sparse-view Radon transform, i.i.d.~zero-mean Gaussian random matrix of variance $1/m$, or radially subsampled two-dimensional Fourier transform. Such matrices are commonly used in the context of computerized tomography (CT)~\cite{Kak.Slaney1988}, compressive sensing~\cite{Candes.etal2006, Donoho2006}, and magnetic resonance imaging (MRI)~\cite{Knoll.etal2011}, respectively. In all simulations, we set the measurement ratio to be approximately $m/n = 0.5$ with AWGN corresponding to input signal-to-noise ratio (SNR) of 30 dB and 40 dB. The images used correspond to 10 images randomly selected from the NYU fastMRI dataset~\cite{Zbontar.etal2018}, resized to be $160 \times 160$ pixels (see Fig.~\ref{Fig:TestImages} in the supplement). BC-RED is set to work with 16 blocks, each of size $40 \times 40$ pixels. The reconstruction quality is quantified using SNR averaged over all ten test images.

In addition to well-studied denoisers, such as TV and BM3D, we design our own CNN denoiser denoted $\DnCNNast$, which is a simplified version of the popular DnCNN denoiser (see Supplement~\ref{Sec:TechnicalDetails} for details). This simplification reduces the computational complexity of denoising, which is important when running many iterations of BC-RED. Additionally, it makes it easier to control the global Lipschitz constant of the CNN via spectral-normalization~\cite{Sedghi.etal2019}. We train $\DnCNNast$ for the removal of AWGN at four noise levels corresponding to $\sigma \in \{5,10,15,20\}$. For each experiment, we select the denoiser achieving the highest SNR value. Note that the $\sigma$ parameter of BM3D is also fine-tuned for each experiment from the same set $\{5,10,15,20\}$.


\begin{figure*}[t]
\centering\includegraphics[width=0.8\textwidth]{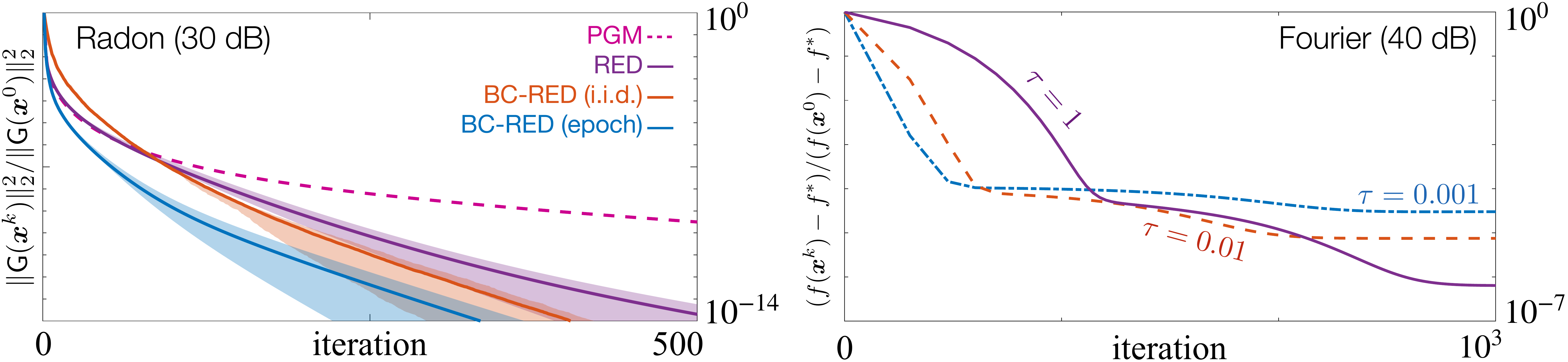}
\caption{\textbf{Left}: Illustration of the convergence of BC-RED under a nonexpansive $\DnCNNast$ prior. Average normalized distance to $\zer(\Gsf)$ is plotted against the iteration number with the shaded areas representing the range of values attained over all test images. \textbf{Right}: Illustration of the influence of the parameter $\tau > 0$ for solving TV regularized least-squares problem using BC-RED. As $\tau$ increases, BC-RED provides an increasingly accurate approximation to the TV optimization problem.}
\label{Fig:convergenceCT}
\end{figure*}


\begin{table*}[t]
\centering
\caption{Average SNRs obtained for different measurement matrices and image priors.}
\vspace{5pt}
\small
\label{Tab:SNR}
\begin{tabular*}{13.3cm}{L{90pt}C{30pt}C{30pt}cC{30pt}C{30pt}cC{30pt}C{30pt}} 
\toprule
\textbf{Methods} & \multicolumn{2}{c}{\textbf{Radon}} & & \multicolumn{2}{c}{\textbf{Random}} & & \multicolumn{2}{c}{$\textbf{Fourier}$} \\
\midrule
& \textbf{30 dB} & \textbf{40 dB} & & \textbf{30 dB} & \textbf{40 dB} & & \textbf{30 dB} & \textbf{40 dB} \\
\cmidrule{2-3} \cmidrule{5-6} \cmidrule{8-9}
\textbf{PGM (TV)} 	& 20.66 & 24.40 & & 26.07 & \emph{\textbf{28.42}} & & 28.74 & 29.99 \\
\textbf{U-Net} 		& \emph{\textbf{21.90}} & 21.72 & & 16.37 & 16.40 & & 22.11 & 22.11 \\
\noalign{\vskip 1pt}\cdashline{1-9}\noalign{\vskip 3pt}
\textbf{RED (TV)} 	& 20.79 & 24.46 & & 25.64 & \colorbox{lightgreen}{\makebox(24,6){28.30}} & & 28.67 & 29.97 \\
\textbf{BC-RED (TV)} & 20.78 & 24.42 & & 25.70 & \colorbox{lightgreen}{\makebox(24,6){28.39}} &  & 28.71 & 29.99 \\
\noalign{\vskip 1pt}\cdashline{1-9}\noalign{\vskip 3pt}
\textbf{RED (BM3D)} & \colorbox{lightgreen}{\makebox(24,6){21.55}} & \colorbox{lightgreen}{\makebox(24,6){\emph{\textbf{25.24}}}} & & 26.46 & 27.82 & & 28.89 & 29.79 \\
\textbf{BC-RED (BM3D)} & \colorbox{lightgreen}{\makebox(24,6){21.56}} & \colorbox{lightgreen}{\makebox(24,6){25.16}} & & 26.50 & 27.88 & & 28.85 & 29.80 \\
\noalign{\vskip 1pt}\cdashline{1-9}\noalign{\vskip 3pt}
\textbf{RED ($\text{DnCNN}^\ast$)} & 20.89 & 24.38 & & \colorbox{lightgreen}{\makebox(24,6){26.53}} & 28.05 & & \colorbox{lightgreen}{\makebox(24,6){29.33}} & \colorbox{lightgreen}{\makebox(24,6){\emph{30.32}}} \\
\textbf{BC-RED ($\text{DnCNN}^\ast$)} & 20.88 & 24.42 & & \colorbox{lightgreen}{\makebox(24,6){\emph{\textbf{26.60}}}} & 28.12 & & \colorbox{lightgreen}{\makebox(24,6){\emph{\textbf{29.40}}}} & \colorbox{lightgreen}{\makebox(24,6){\emph{\textbf{30.39}}}} \\
\bottomrule
\end{tabular*}
\end{table*}

Theorem~\ref{Thm:ConvThm1} establishes the convergence of BC-RED in expectation to an element of $\zer(\Gsf)$. This is illustrated in Fig.~\ref{Fig:convergenceCT} (left) for the Radon matrix with $30$ dB noise and a nonexpansive $\DnCNNast$ denoiser (see also Fig.~\ref{Fig:ConvergencePlots} in the supplement). The average value of $\|\Gsf(\xbm^k)\|_2^2/\|\Gsf(\xbm^0)\|_2^2$ is plotted against the iteration number for the full-gradient RED and BC-RED, with $b$ updates of BC-RED (each modifying a single block) represented as one iteration. We numerically tested two block selection rules for BC-RED (\emph{i.i.d.} and \emph{epoch}) and observed that processing in randomized epochs leads to a faster convergence. For reference, the figure also plots the normalized squared norm of the gradient mapping vectors produced by the traditional PGM with TV~\cite{Beck.Teboulle2009a}. The shaded areas indicate the range of values taken over $10$ runs corresponding to each test image. The results highlight the potential of BC-RED to enjoy a better convergence rate compared to the full-gradient RED, with BC-RED (epoch) achieving the accuracy of $10^{-10}$ in 104 iterations, while the full-gradient RED achieves the same accuracy in 190 iterations.

Theorem \ref{Thm:ProxConv} establishes that for proximal-operator denoisers, BC-RED computes an approximate solution to~\eqref{Eq:RegMin} with an accuracy controlled by the parameter $\tau$. This is illustrated in Fig.~\ref{Fig:convergenceCT} (right) for the Fourier matrix with $40$ dB noise and the TV regularized least-squares problem. The average value of ${(f(\xbm^k)-f^\ast)/(f(\xbm^0)-f^\ast)}$ is plotted against the iteration number for BC-RED with ${\tau \in \{0.01,0.1, 1\}}$. The optimal value $f^\ast$ is obtained by running the traditional PGM until convergence. As before, the figure groups $b$ updates of BC-RED as a single iteration. The results are consistent with our theoretical analysis and show that as $\tau$ increases BC-RED provides an increasingly accurate solution to TV. On the other hand, since the range of possible values for the step-size $\gamma$ depends on $\tau$, the speed of convergence to $f^\ast$ is also influenced by $\tau$.

The benefits of the full-gradient RED algorithms have been well discussed in prior work~\cite{Romano.etal2017, Bigdeli.etal2017, Reehorst.Schniter2019, Metzler.etal2018, Mataev.etal2019}. Table~\ref{Tab:SNR} summarizes the average SNR performance of BC-RED in comparison to the full-gradient RED for all three matrix types and several priors. Unlike the full-gradient RED, BC-RED is implemented using block-wise denoisers that work on image patches rather than the full images. We empirically found that 40 pixel padding on the denoiser input is sufficient for BC-RED to match the performance of the full-gradient RED. The table also includes the results for the traditional PGM with TV~\cite{Beck.Teboulle2009a} and the widely-used end-to-end U-Net approach~\cite{Jin.etal2017a, Han.etal2017}. The latter first backprojects the measurements into the image domain and then denoises the result using U-Net~\cite{Ronneberger.etal2015}. The model was specifically trained end-to-end for the Radon matrix with 30 dB noise and applied as such to other measurement settings. All the algorithms were run until convergence with hyperparameters optimized for SNR. The $\DnCNNast$ denoiser in the table corresponds to the residual network with the Lipschitz constant of two (see Supplement~\ref{Sec:ArchitectureTraining} for details). The overall best SNR in the table is highlighted in bold-italic, while the best RED prior is highlighted in light-green. First, note the excellent agreement between BC-RED and the full-gradient RED. This close agreement between two methods is encouraging as BC-RED relies on block-wise denoising and our analysis does not establish uniqueness of the solution, yet, in practice, both methods seem to yield solutions of nearly identical quality. Second, note that BC-RED and RED provide excellent approximations to PGM-TV solutions. Third, note how (unlike U-Net) BC-RED and RED with $\DnCNNast$ generalize to different measurement models. Finally. no prior seems to be universally good on all measurement settings, which indicates to the potential benefit of tailoring specific priors to specific measurement models.



\begin{figure*}[t]
\centering\includegraphics[width=0.9\textwidth]{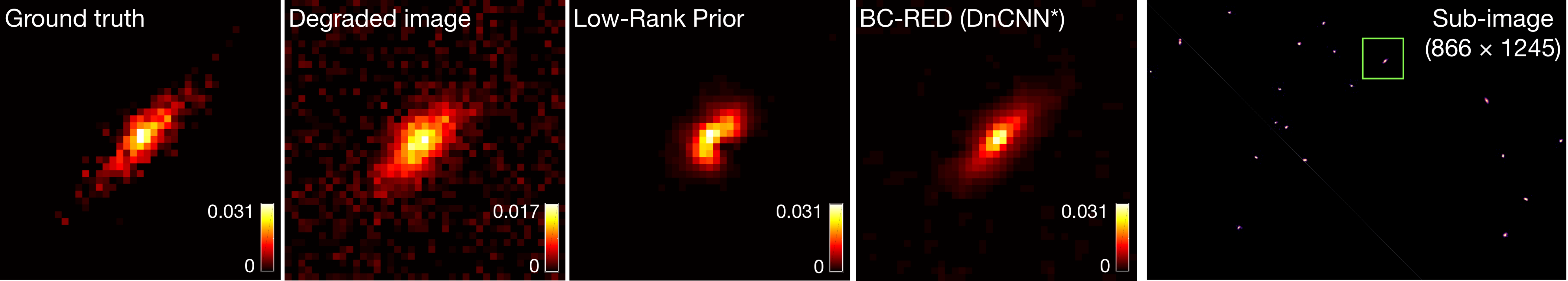}
\caption{Recovery of a $8292 \times 8364$ pixel galaxy image degraded by a spatially variant blur and a high-amount of AWGN. The efficacy of BC-RED is due to the natural sparsity in this large-scale problem, with all of the information contained in a small part of the full image.}
\label{Fig:galaxyImages}
\end{figure*}


Coordinate descent methods are known to be highly beneficial in problems where both $m$ and $n$ are very large, but each measurement depends only on a small subset of the unknowns~\cite{Niu.etal2011}. Fig.~\ref{Fig:galaxyImages} demonstrates BC-RED in such large-scale setting by adopting the experimental setup from a recent work~\cite{Farrens.etal2017} (see also Fig.~\ref{Fig:MoreGalaxies} in the supplement). Specifically, we consider the recovery of a $8292 \times 8364$ pixel galaxy image degraded by 597 known point spread functions (PSFs) corresponding to different spatial locations. The natural sparsity of the problem makes it ideal for BC-RED, which is implemented to update $41 \times 41$ pixel blocks in a randomized fashion by only picking areas containing galaxies. The computational complexity of BC-RED is further reduced by considering a simpler variant of $\DnCNNast$ that has only four convolutional layers (see Fig.~\ref{Fig:DnCNNstar} in the supplement). For comparison, we additionally show the result obtained by using the low-rank recovery method from~\cite{Farrens.etal2017} with all the parameters kept at the values set by the authors. Note that our intent here is not to justify $\DnCNNast$ as a prior for image deblurring, but to demonstrate that BC-RED can indeed be applied to a realistic, nontrivial image recovery task on a large image.


\section{Conclusion and Future Work}
\label{Sec:Conclusion}

Coordinate descent methods have become increasingly important in optimization for solving large-scale problems arising in data analysis. We have introduced BC-RED as a coordinate descent extension to the current family of RED algorithms and theoretically analyzed its convergence. Preliminary experiments suggest that BC-RED can be an effective tool in large-scale estimation problems arising in image recovery. More experiments are certainly needed to better asses the promise of this approach in various estimation tasks. For future work, we would like to explore accelerated and asynchronous variants of BC-RED to further enhance its performance in parallel settings.


\appendix

We adopt the monotone operator theory~\cite{Ryu.Boyd2016, Bauschke.Combettes2017} for a unified analysis of BC-RED. In Supplement~\ref{Sec:Proof1}, we prove the convergence of BC-RED to an element of $\zer(\Gsf)$. In Supplement~\ref{Sec:Proof2}, we prove that for proximal-operator denoisers, BC-RED converges to an approximate solution of~\eqref{Eq:RegMin}. For completeness, in Supplement~\ref{Sec:CoordinateAnalysis}, we discuss the well-known convergence results for traditional coordinate descent~\cite{Tseng2001, Nesterov2012, Beck.Tetruashvili2013, Wright2015, Fercoq.Gramfort2018}. In Supplement~\ref{Sec:BackgroundMaterial}, we provide the background material used in Supplement~\ref{Sec:Proof1} and Supplement~\ref{Sec:Proof2}, expressed in a form convenient for block-coordinate analysis. In Supplement~\ref{Sec:TechnicalDetails}, we provide additional technical details omitted from the main paper due to space, such as the details on computational complexity and CNN architectures. In Supplement~\ref{Sec:AdditionalSimulations}, we present additional simulations that were also omitted from the main paper due to space.

\section{Proof of Theorem~\ref{Thm:ConvThm1}}
\label{Sec:Proof1}

A fixed-point convergence of averaged operators is well-known under the name of Krasnosel'skii-Mann theorem (see Section 5.2 in~\cite{Bauschke.Combettes2017}) and was recently applied to the analysis of PnP~\cite{Sun.etal2018a} and several full-gradient RED algorithms in~\cite{Reehorst.Schniter2019}. Our analysis here extends these results to the block-coordinate setting and provides explicit worst-case convergence rates for BC-RED.

We consider the following operators
$$\Gsf_i = \nabla_i g + \Hsf_i \quad\text{with}\quad \Hsf_i = \tau \Usf_i^\Tsf(\Isf-\Dsf).$$
and proceed in several steps.
\begin{enumerate}[label=(\alph*), leftmargin=*]
\item Since $\nabla_i g$ is block $L_i$-Lipschitz continuous, it is also block $\Lmax$-Lipschitz continuous. Hence, we know from Proposition~\ref{Prop:BlockCocoer} in Supplement~\ref{Sec:Convexity} that it is block $(1/\Lmax)$-cocoercive. Then from Proposition~\ref{Prop:NonexpEquiv} in Supplement~\ref{Sec:AveragedOp}, we know that the operator ${(\Usf_i^\Tsf-(2/\Lmax)\nabla_i g)}$ is block nonexpansive.
\item From the definition of $\Hsf_i$ and the fact that $\Dsf_i$ is block nonexpansive, we know that ${(\Usf_i^\Tsf-(1/\tau)\Hsf_i) = \Dsf_i}$ is block nonexpansive.

\item From Proposition~\ref{Prop:BlockConvNonexp} in Supplement~\ref{Sec:Prelims}, we know that a convex combination of block nonexpansive operators is also block nonexpansive, hence we conclude that
\begin{align*}
&\Usf_i^\Tsf - \frac{2}{\Lmax+2\tau}\Gsf_i \\
&= \left(\frac{2}{\Lmax+2\tau}\cdot \frac{\Lmax}{2}\right)\left[\Usf_i^\Tsf-\frac{2}{\Lmax}\nabla_i g\right] \nonumber\\ 
&\quad\quad\quad+ \left(\frac{2}{\Lmax+2\tau}\cdot \frac{2\tau}{2}\right)\left[\Usf_i^\Tsf - \frac{1}{\tau}\Hsf_i\right],
\end{align*}
is block nonexpansive. Then from Proposition~\ref{Prop:NonexpEquiv} in Supplement~\ref{Sec:AveragedOp}, we know that $\Gsf_i$ is block $1/(\Lmax+2\tau)$-cocoercive.
\item Consider any $\xbmast \in \zer(\Gsf)$, an index $i \in \{1, \dots, b\}$ picked uniformly at random, and a single iteration of BC-RED $\xbm^+ = \xbm - \gamma\Usf_i\Gsf_i\xbm$. Define a vector $\hbm_i \defn \Usf_i^\Tsf(\xbm-\xbmast) \in \R^{n_i}$. We then have
\begin{align}
\label{Eq:SingleIter}
\nonumber&\|\xbm^+-\xbmast\|^2 \\
\nonumber=& \|\xbm - \xbmast - \gamma \Usf_i\Gsf_i\xbm\|^2\\
\nonumber=& \|\xbm-\xbmast\|^2 - 2\gamma (\Usf_i\Gsf_i\xbm)^\Tsf(\xbm-\xbmast) + \gamma^2\|\Gsf_i\xbm\|^2 \\
\nonumber=& \|\xbm-\xbmast\|^2 - 2\gamma (\Gsf_i\xbm-\Gsf_i\xbmast)^\Tsf\hbm_i + \gamma^2\|\Gsf_i\xbm\|^2 \\
\nonumber\leq& \|\xbm-\xbmast\|^2 - \frac{2\gamma-(\Lmax+2\tau)\gamma^2}{\Lmax+2\tau}\|\Gsf_i\xbm\|^2 \\
\leq& \|\xbm-\xbmast\|^2 - \frac{\gamma}{\Lmax+2\tau}\|\Gsf_i \xbm\|^2,
\end{align}
where in the third line we used $\Gsf_i\xbmast = \Usf_i^\Tsf\Gsf\xbmast = \zerobm$, in the fourth line the block cocoercivity of $\Gsf_i$, and in the last line the fact that $0 < \gamma \leq 1/(\Lmax+2\tau)$. 

\item By taking a conditional expectation on both sides and rearranging the terms, we obtain
\begin{align*}
&\frac{\gamma}{\Lmax + 2\tau} \E\left[\|\Gsf_i\xbm\|^2 | \xbm\right] \nonumber\\
&=  \frac{\gamma}{b(\Lmax + 2\tau)} \sum_{i = 1}^b \|\Gsf_i\xbm\|^2 = \frac{\gamma}{b(\Lmax + 2\tau)} \|\Gsf\xbm\|^2 \nonumber\\ 
&\leq   \E\left[\|\xbm-\xbmast\|^2 - \|\xbm^+ - \xbmast\|^2 | \xbm \right]
\end{align*}
\item Hence by averaging over $t \geq 1$ iterations and taking the total expectation
\begin{align}
\E\left[\frac{1}{t}\sum_{k = 1}^{t} \|\Gsf\xbm^{k-1}\|^2\right] &\leq \frac{1}{t}\left[\frac{b(\Lmax+2\tau)}{\gamma}\|\xbm^0-\xbmast\|^2\right] \nonumber\\
&\leq \frac{1}{t}\left[\frac{b(\Lmax+2\tau)}{\gamma}R_0^2\right].
\end{align}
\end{enumerate}
The last inequality directly leads to the result.

\textbf{Remark}. Eq.~\eqref{Eq:SingleIter} implies that, under Assumptions~\ref{As:NonemptySet}-\ref{As:NonexpansiveDen}, the iterates of BC-RED satisfy
\begin{equation}
\label{Eq:DistanceReduction}
\|\xbm^t-\xbmast\| \leq \|\xbm^{t-1}-\xbmast\| \leq \cdots \leq \|\xbm^0-\xbmast\| \leq R_0,
\end{equation}
which means that the distance of the iterates of BC-RED to $\zer(\Gsf)$ is nonincreasing.

\textbf{Remark}. Suppose we are solving a \emph{coordinate friendly problem}~\cite{Peng.etal2016}, in which the cost of the full gradient update is $b$ times the cost of block update. Consider the step-size $\gamma = 1/(L + 2\tau)$ where $L$ is the global Lipschitz constant of the gradient method. A similar analysis as above would yield the following convergence rate for the gradient method
$$\frac{1}{t}\sum_{k = 1}^{t} \|\Gsf\xbm^{k-1}\|^2 \leq \frac{(L+2\tau)^2R_0^2}{t}$$
Now, consider the step-size $\gamma = 1/(\Lmax + 2\tau)$ and suppose that we run $(t \cdot b)$ updates of BC-RED with $t \geq 1$. Then, we have that
$$\E\left[\frac{1}{tb}\sum_{k = 1}^{tb} \|\Gsf\xbm^{k-1}\|^2\right] \leq \frac{(\Lmax + 2\tau)^2 R_0^2}{t}.$$
Since $\Lmax  \leq L \leq b\Lmax$, where the upper bound can sometimes be tight, we conclude that the expected complexity of the block-coordinate algorithm is lower compared to the full algorithm.

\section{Proof of Theorem~\ref{Thm:ProxConv}}
\label{Sec:Proof2}

The concept of Moreau smoothing is well-known and has been extensively used in other contexts (see for example~\cite{Yu2013}). Our contribution is to formally connect the concept to RED-based algorithms, which leads to its novel justification as an approximate MAP estimator. The basic review of relevant concepts from proximal optimization is given in Supplement~\ref{Sec:MoreauTheory}.

For $\tau > 0$, we consider the Moreau envelope of $h$
$$h_{(1/\tau)}(\xbm) \defn \min_{\zbm \in \R^n}\left\{\frac{1}{2}\|\zbm-\xbm\|^2 + (1/\tau) h(\zbm)\right\}.$$
From Proposition~\ref{Prop:UniformBoundMoreau} in~Supplement~\ref{Sec:MoreauTheory} we know that
\begin{equation}
\label{Eq:MorApprox}
0 \leq h(\xbm) - \tau h_{(1/\tau)}(\xbm) \leq \frac{G_0}{2\tau}
\end{equation}
and from Proposition~\ref{Prop:GradMorProxRes} in~Supplement~\ref{Sec:MoreauTheory}, we know that
\begin{equation}
\label{Eq:MorGrad}
\tau\nabla h_{(1/\tau)}(\xbm) = \tau(\xbm - \prox_{(1/\tau)h}(\xbm)).\end{equation}
Hence, we can express the function $f$ as follows
\begin{align*}
f(\xbm) 
&= g(\xbm) + h(\xbm) \\
&= (g(\xbm) + \tau h_{(1/\tau)}(\xbm)) + (h(\xbm) - \tau h_{(1/\tau)}(\xbm))  \\
&= f_{(1/\tau)}(\xbm) + (h(\xbm) - \tau h_{(1/\tau)}(\xbm)),
\end{align*}
where $f_{(1/\tau)} \defn g + \tau h_{(1/\tau)}$.
From eq.~\eqref{Eq:MorGrad}, we conclude that a single iteration of BC-RED
$$\xbm^+ = \xbm - \gamma \Usf_i \Gsf_i \xbm \quad\text{with}\quad \Gsf_i = \Usf_i^\Tsf(\nabla g(\xbm) + \tau \nabla h_{(1/\tau)}(\xbm))$$
 is performing a block-coordinate descent on the function $f_{(1/\tau)}$.
From eq.~\eqref{Eq:MorApprox} and the convexity of the Moreau envelope, we have
$$f_{(1/\tau)}^\ast = f_{(1/\tau)}(\xbmast) \leq f_{(1/\tau)}(\xbm) \leq f(\xbm), \quad \xbm \in \R^n, \xbmast \in \zer(\Gsf).$$
Hence, there exists a finite $f^\ast$ such that $f(\xbm) \geq f^\ast$ with $f_{(1/\tau)}^\ast \leq f^\ast$. Consider the iteration $t \geq 1$ of BC-RED, then we have that
\begin{align*}
\E[f(\xbm^t)] - f^\ast 
&\leq \E[f(\xbm^t)] - f_{(1/\tau)}^\ast \\
&= (\E[f_{(1/\tau)}(\xbm^t)]-f_{(1/\tau)}^\ast) \nonumber\\
&\quad\quad\quad\quad+ \E[(h(\xbm^t)-\tau h_{(1/\tau)}(\xbm^t)]) \\
&\leq \frac{2b}{\gamma t}R_0^2 + \frac{G_0^2}{2\tau},
\end{align*}
where we applied~\eqref{Eq:CordDesConv}, which is further discussed in Supplement~\ref{Sec:CoordinateAnalysis}.

The proof of eq.~\eqref{Eq:BCREDProx2} is directly obtained by setting $\tau = \sqrt{t}$, $\gamma = \Lmax+2\sqrt{t}$, and noting that $t \geq \sqrt{t}$, for all $t \geq 1$.

\section{Convergence of the Traditional Coordinate Descent}
\label{Sec:CoordinateAnalysis}

The following analysis has been adopted from~\cite{Wright2015}. We include it here for completeness.

Consider the following denoiser 
$$\Dsf(\xbm) = \xbm - \frac{1}{\tau}\nabla h(\xbm), \quad \tau > 0, \quad \xbm \in \R^n,$$ 
and the following function
$$f(\xbm) = g(\xbm) + h(\xbm)$$
where $g$ and $h$ are both convex and continuously differentiable. For this denoiser, we have that
$$\Gsf(\xbm) = \nabla g(\xbm) + \tau (\xbm-\Dsf(\xbm)) = \nabla g(\xbm) + \nabla h(\xbm) = \nabla f(\xbm).$$
Therefore, in this case, BC-RED is minimizing a convex and smooth function $f$, which means that any $\xbmast \in \zer(\Gsf)$ is a global minimizer of $f$. Additionally, due to Proposition~\ref{Prop:NonexpCocoerOp} in Supplement~\ref{Sec:Prelims} and Proposition~\ref{Prop:BlockCocoer} in Supplement~\ref{Sec:Convexity}, we have
\begin{align}
&\Dsf_i \text{ is block nonexpansive} \nonumber\\
\Leftrightarrow\quad &\nabla_i h \text{ is block $2\tau$-Lipschitz continuous}.
\end{align}
Hence, for such denoisers, Assumption~\ref{As:NonexpansiveDen} is equivalent to the $2\tau$-Lipschitz smoothness of block gradients $\nabla_i h$.

To prove eq.~\ref{Eq:CordDesConv}, we consider the following iteration
$$\xbm^+ = \xbm - \Usf_i \Gsf_i\xbm \quad\text{with}\quad \Gsf_i = \nabla_i f = \nabla_i g + \nabla_i h,$$
which under our assumptions is a special case of the setting for Theorem~\ref{Thm:ConvThm1}.
\begin{enumerate}[label=(\alph*), leftmargin=*]
\item From the block Lipscthiz continuity of $f$, we conclude that
\begin{align*}
f(\xbm^+) 
&\leq f(\xbm) + \nabla f(\xbm)^\Tsf(\xbm^+-\xbm) \nonumber\\
&\quad\quad\quad\quad\quad\quad\quad\quad+\frac{(\Lmax+2\tau)}{2}\|\xbm^+-\xbm\|^2 \\
&= f(\xbm) - \gamma \|\nabla_i f(\xbm)\|^2 \nonumber\\
&\quad\quad\quad\quad\quad\quad\quad\quad+ \frac{\gamma^2(\Lmax+2\tau)}{2}\|\nabla_i f(\xbm)\|^2 \\
&\leq f(\xbm) - \frac{\gamma}{2} \|\nabla_i f(\xbm)\|^2,
\end{align*}
where the last inequality comes from the fact that $\gamma \leq 1/(\Lmax+2\tau)$. 

\item For all $t \geq 1$, define
$$\varphi_t \defn \E\left[f(\xbm^t)\right] - f(\xbmast).$$
Then from (a), we can conclude that
\begin{align}
\varphi_t &\leq \varphi_{t-1} - \frac{\gamma}{2b}\E\left[\|\nabla f(\xbm^{t-1})\|^2\right] \nonumber\\
&\leq \varphi_{t-1} - \frac{\gamma}{2b}\E\left[\|\nabla f(\xbm^{t-1})\|\right]^2,
\end{align}
where in the last inequality we used the Jensen's inequality, and the fact that
\begin{align}
\E\left[\|\nabla_i f(\xbm^{t-1})\|^2\right] &= \E\left[\E\left[\|\nabla_i f(\xbm^{t-1})\|^2 | \xbm^{t-1}\right]\right] \nonumber\\
&= \E\left[\frac{1}{b} \sum_{i = 1}^b \|\nabla_i f(\xbm^t)\|^2 \right] \nonumber\\
&= \frac{1}{b}\E\left[\|\nabla f(\xbm^{t-1})\|^2\right].
\end{align}

\item From convexity, we know that
\begin{align}
\varphi_t = \E\left[f(\xbm^t)\right] - f(\xbmast) &\leq \E\left[\nabla f(\xbm^t)^\Tsf(\xbm^t-\xbmast)\right] \nonumber\\
&\leq \E\left[\|\nabla f(\xbm^t)\| \|\xbm^t-\xbmast\|\right] \nonumber\\
&\leq R_0 \cdot \E\left[\|\nabla f(\xbm^t)\|\right],
\end{align}
where in the last inequality, we used eq.~\eqref{Eq:DistanceReduction}. This combined with the result of (b) implies that
$$\varphi_t \leq \varphi_{t-1} - \frac{\gamma}{2b} \frac{\varphi_{t-1}^2}{R_0^2}.$$

\item Note that from (c), we can obtain
$$\frac{1}{\varphi_t}-\frac{1}{\varphi_{t-1}} = \frac{\varphi_{t-1}-\varphi_t}{\varphi_t\varphi_{t-1}} \geq \frac{\varphi_{t-1}-\varphi_t}{\varphi_{t-1}^2} \geq \frac{\gamma}{2b R_0^2}.$$
By iterating this inequality, we get the final result
$$\frac{1}{\varphi_t} \geq \frac{1}{\varphi_0} + \frac{\gamma t}{2b\|\xbm^0-\xbmast\|^2} \geq \frac{\gamma t}{2bR_0^2} \;\Rightarrow\; \varphi_t \leq \frac{2b}{\gamma t}R_0^2.$$
\end{enumerate}

\section{Background Material}
\label{Sec:BackgroundMaterial}

The results in this section are well-known in the optimization literature and can be found in different forms in standard textbooks~\cite{Rockafellar.Wets1998, Boyd.Vandenberghe2004, Nesterov2004, Bauschke.Combettes2017}. For completeness, we summarize the key results useful for our analysis by restating them in a block-coordinate form.

\subsection{Properties of Block-Coordinate Operators}
\label{Sec:Prelims}

Most of the concepts in this part come from the traditional monotone operator theory~\cite{Ryu.Boyd2016, Bauschke.Combettes2017} adapted for block-coordinate operators.

\begin{definition}
We define the \emph{block-coordinate operator}  $\Tsf_i: \R^n \rightarrow \R^{n_i}$ of $\Tsf: \R^n \rightarrow \R^n$ as 
$$\Tsf_i\xbm \defn [\Tsf\xbm]_i = \Usf_i^\Tsf \Tsf\xbm \in \R^{n_i}, \quad \xbm \in \R^n.$$
The operator $\Tsf_i$ applies $\Tsf$ to its input vector and then extracts the subset of outputs corresponding to the coordinates in the block $i \in \{1, \dots, b\}$.
\end{definition}

\textbf{Remark}.~When $b = 1$, we have that $n = n_1$ and $\Usf_1 = \Usf_1^\Tsf = \Isf$. Then, all the properties in this section reduce to their standard counterparts from the monotone operator theory in $\R^n$. In such settings, we simply drop the word \emph{block} from the name of the property.

\medskip
\begin{definition}
$\Tsf_i$ is \emph{block Lipschitz continuous with constant $\lambda_i > 0$} if
$$\|\Tsf_i\xbm - \Tsf_i\ybm\| \leq \lambda_i\|\hbm_i\|,\quad  \xbm = \ybm + \Usf_i\hbm_i, \quad \ybm \in \R^n, \hbm_i \in \R^{n_i}.$$
When $\lambda_i = 1$, we say that $\Tsf_i$ is \emph{block nonexpansive}. 
\end{definition}

\begin{definition}
An operator $\Tsf_i$ is \emph{block cocoercive with constant $\beta_i > 0$} if
$$(\Tsf_i\xbm-\Tsf_i\ybm)^\Tsf\hbm_i \geq \beta_i\|\Tsf_i\xbm-\Tsf_i\ybm\|^2,$$
$$ \xbm = \ybm + \Usf_i\hbm_i, \quad \ybm \in \R^n, \hbm_i \in \R^{n_i}.$$
When $\beta_i = 1$, we say that $\Tsf_i$ is \emph{block firmly nonexpansive}.
\end{definition}

\medskip\noindent
The following propositions are conclusions derived from the definition of above.

\medskip
\begin{proposition}
\label{Prop:BlockConvNonexp}
Let $\Tsf_{ij}: \R^n \rightarrow \R^{n_i}$ for $j \in J$ be a set of block nonexpansive operators. Then, their convex combination
$$\Tsf_i \defn \sum_{j \in J} \theta_j \Tsf_{ij}, \quad\text{with}\quad \theta_j > 0 \text{ and } \sum_{j \in J} \theta_j = 1,$$
is nonexpansive.
\end{proposition}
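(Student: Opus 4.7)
The plan is to apply the definition of block nonexpansiveness directly to the convex combination, using only the triangle inequality and the normalization $\sum_{j \in J} \theta_j = 1$. First I would fix an arbitrary $\ybm \in \R^n$ and $\hbm_i \in \R^{n_i}$, and set $\xbm \defn \ybm + \Usf_i \hbm_i$, so that checking block nonexpansiveness of $\Tsf_i$ amounts to bounding $\|\Tsf_i \xbm - \Tsf_i \ybm\|$ by $\|\hbm_i\|$.

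Next I would exploit linearity of the finite convex combination to write
\begin{equation*}
\Tsf_i \xbm - \Tsf_i \ybm = \sum_{j \in J} \theta_j \bigl(\Tsf_{ij}\xbm - \Tsf_{ij}\ybm\bigr).
\end{equation*}
Applying the triangle inequality (valid since $\theta_j > 0$) and then the block nonexpansiveness of each $\Tsf_{ij}$ with respect to the \emph{same} displacement $\hbm_i$ along block $i$, I get
\begin{equation*}
\|\Tsf_i \xbm - \Tsf_i \ybm\| \leq \sum_{j \in J} \theta_j \|\Tsf_{ij}\xbm - \Tsf_{ij}\ybm\| \leq \sum_{j \in J} \theta_j \|\hbm_i\| = \|\hbm_i\|,
\end{equation*}
where the final equality uses $\sum_{j \in J} \theta_j = 1$. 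Since $\ybm$ and $\hbm_i$ were arbitrary, this establishes that $\Tsf_i$ is block nonexpansive.

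There is no substantive obstacle here; the argument is essentially the standard proof that a convex combination of nonexpansive operators is nonexpansive, transplanted to the block-coordinate setting. The only point worth flagging is conceptual rather than technical: it is crucial that every $\Tsf_{ij}$ is block nonexpansive along the \emph{same} block index $i$, so that the single displacement $\hbm_i$ serves as a uniform upper bound for all $j \in J$ on the right-hand side of the chain of inequalities. This is already built into the hypothesis, since every $\Tsf_{ij}$ maps into $\R^{n_i}$.
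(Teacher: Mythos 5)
Your argument is correct and matches the paper's proof essentially line for line: both fix $\xbm = \ybm + \Usf_i\hbm_i$, apply the triangle inequality to the convex combination, invoke block nonexpansiveness of each $\Tsf_{ij}$ against the common displacement $\hbm_i$, and conclude via $\sum_{j \in J}\theta_j = 1$. No differences worth noting.
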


\begin{proof}
By using the triangular inequality and the definition of block nonexpansiveness, we obtain
$$\|\Tsf_i\xbm-\Tsf_i\ybm\| \leq \sum_{j \in J} \theta_j \|\Tsf_{ij}\xbm-\Tsf_{ij}\ybm\| \leq \left(\sum_{j \in J}\theta_j\right) \|\hbm_i\| = \|\hbm_i\|,$$
for all $\ybm \in \R^n$ and $\hbm_i \in \R^{n_i}$ where $\xbm = \ybm + \Usf_i\hbm_i$ .
\end{proof}

\begin{proposition}
\label{Prop:NonexpCocoerOp}
Consider $\Rsf_i = \Usf_i^\Tsf - \Tsf_i$ where $\Tsf_i: \R^n \rightarrow \R^{n_i}$.
$$\Tsf_i \text{ is block nonexpansive } \quad\Leftrightarrow\quad \Rsf_i \text{ is $(1/2)$-block cocoercive.}$$
\end{proposition}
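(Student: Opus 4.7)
The plan is to reduce the equivalence to a one-line algebraic identity by expanding both sides in terms of $\hbm_i$ and $\Delta_i \defn \Tsf_i\xbm - \Tsf_i\ybm$, where $\xbm = \ybm + \Usf_i\hbm_i$. First I would record the key orthogonality relation $\Usf_i^\Tsf\Usf_i = \Ibm_{n_i}$, which follows from the subspace decomposition $\sum_j \Usf_j\Usf_j^\Tsf = \Isf$ introduced in~\eqref{Eq:SubspaceDecomposition}. Using this, I get $\Rsf_i\xbm - \Rsf_i\ybm = \Usf_i^\Tsf(\xbm-\ybm) - \Delta_i = \hbm_i - \Delta_i$.

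Next I would substitute this expression into the definition of $(1/2)$-block cocoercivity of $\Rsf_i$. Expanding,
\begin{equation*}
(\Rsf_i\xbm-\Rsf_i\ybm)^\Tsf \hbm_i = \|\hbm_i\|^2 - \Delta_i^\Tsf \hbm_i,
\end{equation*}
while
\begin{equation*}
\tfrac{1}{2}\|\Rsf_i\xbm-\Rsf_i\ybm\|^2 = \tfrac{1}{2}\|\hbm_i\|^2 - \Delta_i^\Tsf\hbm_i + \tfrac{1}{2}\|\Delta_i\|^2.
\end{equation*}
Subtracting these, the cross term $\Delta_i^\Tsf \hbm_i$ cancels exactly, giving
\begin{equation*}
(\Rsf_i\xbm-\Rsf_i\ybm)^\Tsf\hbm_i - \tfrac{1}{2}\|\Rsf_i\xbm-\Rsf_i\ybm\|^2 = \tfrac{1}{2}\bigl(\|\hbm_i\|^2-\|\Delta_i\|^2\bigr).
\end{equation*}

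From this identity, the two conditions are seen to be equivalent simultaneously for every admissible pair $(\ybm,\hbm_i)$: the left-hand side is nonnegative for all such pairs precisely when $\|\Tsf_i\xbm-\Tsf_i\ybm\| \leq \|\hbm_i\|$ for all such pairs, i.e.\ when $\Tsf_i$ is block nonexpansive. No extra obstacle is expected here, since the cancellation is exact and the argument is fully reversible; the only subtlety to watch is ensuring $\Usf_i^\Tsf\Usf_i = \Ibm_{n_i}$ is cited cleanly, which is immediate from the block decomposition assumed at the start of Section~\ref{Sec:Algorithm}.
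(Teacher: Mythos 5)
Your proposal is correct and follows essentially the same route as the paper's proof: both expand $(\Rsf_i\xbm-\Rsf_i\ybm)^\Tsf\hbm_i$ and $\tfrac{1}{2}\|\Rsf_i\xbm-\Rsf_i\ybm\|^2$ in terms of $\hbm_i$ and $\Tsf_i\xbm-\Tsf_i\ybm$ and observe that the cross terms cancel. Your packaging of the two displayed equations into a single exact identity is a slightly cleaner way of seeing that both implications hold at once, but it is the same computation.
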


\begin{proof}
First suppose that $\Rsf_i$ is $1/2$ block cocoercive. Let $\xbm = \ybm + \Usf_i\hbm_i$ for all $\ybm \in \R^n$ and $\hbm_i \in \R^{n_i}$. We then have
$$\frac{1}{2}\|\Rsf_i\xbm-\Rsf_i\ybm\|^2 \leq (\Rsf_i\xbm-\Rsf_i\ybm)^\Tsf\hbm_i = \|\hbm_i\|^2 - (\Tsf_i\xbm-\Tsf_i\ybm)^\Tsf\hbm_i.$$
We also have that
$$\frac{1}{2}\|\Rsf_i\xbm-\Rsf_i\ybm\|^2 = \frac{1}{2}\|\hbm_i\|^2 - (\Tsf_i\xbm-\Tsf_i\ybm)^\Tsf\hbm_i + \frac{1}{2}\|\Tsf_i\xbm-\Tsf_i\ybm\|^2.$$
By combining these two and simplifying the expression, we obtain that 
$$\|\Tsf_i\xbm-\Tsf_i\ybm\| \leq \|\hbm_i\|.$$
The converse can be proved by following this logic in reverse.
\end{proof}

\subsection{Block Averaged Operators}
\label{Sec:AveragedOp}

It is well known that the iteration of a nonexpansive operator does not necessarily converge. To see this consider a nonexpansive operator $\Tsf = -\Isf$, where $\Isf$ is identity. However, it is also well known that the convergence can be established for averaged operators.

\begin{definition}
For a constant $\alpha \in (0, 1)$, we say that the operator $\Tsf$ is \emph{$\alpha$-averaged}, if there exists a nonexpansive operator $\Nsf$ such that $\Tsf = (1-\alpha)\Isf + \alpha \Nsf$.
\end{definition}

\begin{definition}
For a constant $\alpha \in (0, 1)$, we say that $\Tsf_i: \R^n \rightarrow \R^{n_i}$ is \emph{block $\alpha$-averaged}, if there exists a block nonexpansive operator $\Nsf_i$ such that $\Tsf_i = (1-\alpha)\Usf_i^\Tsf + \alpha \Nsf_i$.
\end{definition}

\textbf{Remark}.~It is clear that if $\Tsf$ is $\alpha$-averaged, then $\Tsf_i = \Usf_i^\Tsf\Tsf$ is block $\alpha$-averaged.

The following characterization is often convenient.
\begin{proposition}
\label{Prop:BlockAveragedEquiv}
For a block nonexpansive operator $\Tsf_i$, a constant $\alpha \in (0, 1)$, and the operator ${\Rsf_i \defn \Usf_i^\Tsf-\Tsf_i}$, the following are equivalent
\begin{enumerate}[label=(\alph*), leftmargin=*]
\item $\Tsf_i$ is block $\alpha$-averaged
\item $(1-1/\alpha)\Usf_i^\Tsf + (1/\alpha)\Tsf_i$ is block nonexpansive
\item $\|\Tsf_i\xbm - \Tsf_i\ybm\|^2 \leq \|\hbm_i\|^2 - \left(\frac{1-\alpha}{\alpha}\right)\|\Rsf_i\xbm-\Rsf_i\ybm\|^2$, $\hspace{2cm}\xbm = \ybm + \Usf_i\hbm_i, \quad \ybm \in \R^n, \hbm_i \in \R^{n_i}$
\end{enumerate}
\end{proposition}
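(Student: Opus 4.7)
The plan is to establish the equivalences in the order (a) $\Leftrightarrow$ (b) $\Leftrightarrow$ (c), with the bulk of the work being the second one.

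For (a) $\Leftrightarrow$ (b), I would simply invert the defining relation. Starting from $\Tsf_i = (1-\alpha)\Usf_i^\Tsf + \alpha\Nsf_i$ and solving algebraically for $\Nsf_i$ gives $\Nsf_i = (1-1/\alpha)\Usf_i^\Tsf + (1/\alpha)\Tsf_i$, and conversely. So ``$\Tsf_i$ is block $\alpha$-averaged'' is tautologically the same as ``$(1-1/\alpha)\Usf_i^\Tsf + (1/\alpha)\Tsf_i$ is block nonexpansive'', since the latter operator is precisely the $\Nsf_i$ exhibited by the decomposition in the former.

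For (b) $\Leftrightarrow$ (c), the plan is to compute $\|\Nsf_i\xbm - \Nsf_i\ybm\|^2$ for $\xbm = \ybm + \Usf_i\hbm_i$ and rewrite it in terms of $\hbm_i$ and $\Rsf_i\xbm - \Rsf_i\ybm$. Using $\Usf_i^\Tsf\xbm - \Usf_i^\Tsf\ybm = \hbm_i$, one gets
\begin{equation*}
\Nsf_i\xbm - \Nsf_i\ybm = (1-1/\alpha)\hbm_i + (1/\alpha)(\Tsf_i\xbm - \Tsf_i\ybm).
\end{equation*}
Now I would apply the elementary identity
\begin{equation*}
\|(1-t)\ubm + t\vbm\|^2 = (1-t)\|\ubm\|^2 + t\|\vbm\|^2 - t(1-t)\|\ubm-\vbm\|^2,
\end{equation*}
valid for any real $t$ and any $\ubm,\vbm$, with $t = 1/\alpha$, $\ubm = \hbm_i$, and $\vbm = \Tsf_i\xbm-\Tsf_i\ybm$. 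Since $\ubm - \vbm = \hbm_i - (\Tsf_i\xbm - \Tsf_i\ybm) = \Rsf_i\xbm - \Rsf_i\ybm$, this yields
\begin{equation*}
\|\Nsf_i\xbm-\Nsf_i\ybm\|^2 = (1-1/\alpha)\|\hbm_i\|^2 + (1/\alpha)\|\Tsf_i\xbm-\Tsf_i\ybm\|^2 - (1/\alpha)(1-1/\alpha)\|\Rsf_i\xbm-\Rsf_i\ybm\|^2.
\end{equation*}
The condition $\|\Nsf_i\xbm-\Nsf_i\ybm\|^2 \leq \|\hbm_i\|^2$ of (b) then rearranges (dividing by $1/\alpha > 0$ and noting $1 - 1/\alpha = -(1-\alpha)/\alpha$) to exactly the inequality in (c). Since every step is an algebraic equivalence, (b) and (c) are equivalent.

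I do not expect a serious obstacle here: the only thing to be careful about is that $1/\alpha > 1$ when $\alpha \in (0,1)$, so the decomposition $(1-t)\ubm + t\vbm$ is not a convex combination. Fortunately, the three-term identity above holds for arbitrary real $t$, so convexity plays no role. With that observation, the equivalence is purely algebraic, and the proof is a few short lines combining the inversion step for (a) $\Leftrightarrow$ (b) with the identity-based rewriting for (b) $\Leftrightarrow$ (c).
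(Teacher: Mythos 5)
Your proposal is correct and follows essentially the same route as the paper: both arguments reduce (b) $\Leftrightarrow$ (c) to the three-term identity $\|(1-t)\ubm+t\vbm\|^2=(1-t)\|\ubm\|^2+t\|\vbm\|^2-t(1-t)\|\ubm-\vbm\|^2$ together with the observation $\Rsf_i\xbm-\Rsf_i\ybm=\hbm_i-(\Tsf_i\xbm-\Tsf_i\ybm)$, and treat (a) $\Leftrightarrow$ (b) as the algebraic inversion $\Nsf_i=(1-1/\alpha)\Usf_i^\Tsf+(1/\alpha)\Tsf_i$. The only (cosmetic) difference is that you expand $\|\Nsf_i\xbm-\Nsf_i\ybm\|^2$ with $t=1/\alpha$ (correctly noting the identity needs no convexity of the combination), while the paper expands $\|\Tsf_i\xbm-\Tsf_i\ybm\|^2$ with $t=\alpha$ and compares the two sides of (c) directly.
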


\begin{proof}
The equivalence of (a) and (b) is clear from the definition. To establish the equivalence with (c), consider an operator $\Nsf_i$ and $\Tsf_i = (1-\alpha)\Usf_i^\Tsf + \alpha \Nsf_i$. Note that
$$\Rsf_i = \Usf_i^\Tsf - \Tsf_i = \alpha (\Usf_i^\Tsf - \Nsf_i).$$
Then, for all $\ybm \in \R^n$ and $\hbm_i \in \R^{n_i}$, with $\xbm = \ybm + \Usf_i \hbm_i$, we have that
\begin{align}
\label{Equ:AvgExpansion1}
\|\Tsf_i\xbm - \Tsf_i\ybm\|^2
\nonumber&= \|(1-\alpha)\hbm_i + \alpha (\Nsf_i\xbm-\Nsf_i\ybm)\|^2\\
\nonumber&= (1-\alpha) \|\hbm_i\|^2 + \alpha \|\Nsf_i\xbm-\Nsf_i\ybm\|^2 - \nonumber\\
&\hspace{1.6cm}\alpha(1-\alpha)\|\hbm_i - (\Nsf_i\xbm-\Nsf_i\ybm)\|^2 \nonumber\\
&= (1-\alpha) \|\hbm_i\|^2 + \alpha \|\Nsf_i\xbm-\Nsf_i\ybm\|^2 \nonumber\\
&\hspace{1.6cm}- \left(\frac{1-\alpha}{\alpha}\right)\|\Rsf_i\xbm-\Rsf_i\ybm\|^2,
\end{align}
where we used the fact that 
$$\|(1-\alpha) \xbm + \alpha \ybm\|^2 = (1-\alpha) \|\xbm\|^2 + \alpha \|\ybm\|^2 - \alpha(1-\alpha)\|\xbm-\ybm\|^2,$$
where $\theta \in \R$ and $\xbm, \ybm \in \R^n$. Consider also
\begin{align}
\label{Equ:AvgExpansion2}
&\|\hbm_i\|^2 - \left(\frac{1-\alpha}{\alpha}\right)\|\Rsf_i\xbm-\Rsf_i\ybm\|^2 \nonumber\\
&= (1-\alpha)\|\hbm_i\|^2 + \alpha \|\hbm_i\|^2 - \left(\frac{1-\alpha}{\alpha}\right)\|\Rsf_i\xbm-\Rsf_i\ybm\|^2.
\end{align}
It is clear that we have
\begin{align}
\eqref{Equ:AvgExpansion1} \leq \eqref{Equ:AvgExpansion2} &\quad\Leftrightarrow\quad \Nsf_i \text{ is block nonexpansive} \nonumber\\
&\quad\Leftrightarrow\quad \Tsf_i \text{ is block $\alpha$-averaged},
\end{align}
where for the last equivalence, we used the definition of block averagedness.
\end{proof}

\begin{proposition}
\label{Prop:NonexpEquiv}
Consider a block-coordinate operator ${\Tsf_i = \Usf_i^\Tsf\Tsf}$ with $\Tsf: \R^n \rightarrow \R^n$. Let $\xbm = \ybm + \Usf_i\hbm$ with ${\xbm \in \R^n}$, ${\hbm_i \in \R^{n_i}}$ and consider $\beta_i > 0$. Then, the following are equivalent
\begin{enumerate}[label=(\alph*), leftmargin=*]
\item $\Tsf_i$ is block $\beta_i$-cocoercive
\item $\beta_i\Tsf_i$ is block firmly nonexpansive
\item $\Usf_i^\Tsf-\beta_i\Tsf_i$ is block firmly nonexpansive.
\item $\beta_i\Tsf_i$ is block $(1/2)$-averaged.
\item $\Usf_i^\Tsf-2\beta_i\Tsf_i$ is block nonexpansive.
\end{enumerate}
\end{proposition}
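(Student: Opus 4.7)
The plan is to establish all five equivalences by pinning down a short chain of implications: (a) $\Leftrightarrow$ (b), then (b) $\Leftrightarrow$ (c), (b) $\Leftrightarrow$ (d), and (d) $\Leftrightarrow$ (e), with Proposition~\ref{Prop:BlockAveragedEquiv} doing most of the heavy lifting. Throughout, I fix $\ybm\in\R^n$, $\hbm_i\in\R^{n_i}$, set $\xbm = \ybm + \Usf_i\hbm_i$, and abbreviate $\Delta_i \defn \Tsf_i\xbm - \Tsf_i\ybm$. None of the steps should be hard; the whole argument is algebraic manipulation of the polarization identity together with the definitions.

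First I would dispatch (a) $\Leftrightarrow$ (b) by direct rescaling: multiplying the block cocoercivity inequality $\Delta_i^\Tsf \hbm_i \ge \beta_i\|\Delta_i\|^2$ by $\beta_i$ yields $(\beta_i\Delta_i)^\Tsf\hbm_i \ge \|\beta_i\Delta_i\|^2$, which is precisely block $1$-cocoercivity, i.e.\ block firm nonexpansiveness, of $\beta_i\Tsf_i$. Next, (b) $\Leftrightarrow$ (c) follows from expanding the squared norm of $\Rsf_i\xbm - \Rsf_i\ybm = \hbm_i - \beta_i\Delta_i$ via
\begin{equation*}
\|\hbm_i - \beta_i\Delta_i\|^2 = \|\hbm_i\|^2 - 2\beta_i\Delta_i^\Tsf\hbm_i + \|\beta_i\Delta_i\|^2,
\end{equation*}
and comparing with $(\Rsf_i\xbm - \Rsf_i\ybm)^\Tsf\hbm_i = \|\hbm_i\|^2 - \beta_i\Delta_i^\Tsf\hbm_i$; after cancellation, firm nonexpansiveness of $\Usf_i^\Tsf - \beta_i\Tsf_i$ reduces to $\|\beta_i\Delta_i\|^2 \le \beta_i\Delta_i^\Tsf\hbm_i$, which is (b). This is the step I expect to require the most care, since one has to track signs correctly.

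To get (b) $\Leftrightarrow$ (d), I apply Proposition~\ref{Prop:BlockAveragedEquiv}(c) with $\alpha = 1/2$ to the operator $\Ssf_i \defn \beta_i\Tsf_i$: block $(1/2)$-averagedness of $\Ssf_i$ is equivalent to
\begin{equation*}
\|\Ssf_i\xbm - \Ssf_i\ybm\|^2 \le \|\hbm_i\|^2 - \|(\Usf_i^\Tsf - \Ssf_i)\xbm - (\Usf_i^\Tsf - \Ssf_i)\ybm\|^2,
\end{equation*}
and expanding the right-most square as in the previous step reduces this to $\|\Ssf_i\xbm - \Ssf_i\ybm\|^2 \le (\Ssf_i\xbm - \Ssf_i\ybm)^\Tsf\hbm_i$, which is exactly block firm nonexpansiveness of $\beta_i\Tsf_i$, i.e.\ (b). Finally, (d) $\Leftrightarrow$ (e) is an immediate application of Proposition~\ref{Prop:BlockAveragedEquiv}(b) with $\alpha = 1/2$ applied to $\Ssf_i = \beta_i\Tsf_i$: the condition there becomes block nonexpansiveness of $2\beta_i\Tsf_i - \Usf_i^\Tsf$, and since the block nonexpansiveness inequality $\|\cdot\|\le\|\hbm_i\|$ is invariant under negation of the operator, this is equivalent to block nonexpansiveness of $\Usf_i^\Tsf - 2\beta_i\Tsf_i$, giving (e). Combining these four equivalences closes the loop and proves the proposition.
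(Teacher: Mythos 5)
Your proposal is correct and follows essentially the same route as the paper's proof: the identical chain (a)$\Leftrightarrow$(b) by rescaling, (b)$\Leftrightarrow$(c) by polarization, (b)$\Leftrightarrow$(d) via Proposition~\ref{Prop:BlockAveragedEquiv}(c), and (d)$\Leftrightarrow$(e) via Proposition~\ref{Prop:BlockAveragedEquiv}(b) with the sign-flip observation. The only cosmetic difference is that you run (b)$\Leftrightarrow$(c) as a single chain of equivalences where the paper argues the two directions separately.
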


\begin{proof}
The equivalence between (a) and (b) is readily observed by defining $\Psf_i \defn \beta_i\Tsf_i$ and noting that
\begin{align}
(\Psf_i\xbm - \Psf_i\ybm)^\Tsf\hbm_i = \beta_i(\Tsf_i\xbm - \Tsf_i\ybm)^\Tsf\hbm_i \nonumber\\
\quad\text{and}\quad \|\Psf_i\xbm-\Psf_i\ybm\|^2 = \beta_i^2 \|\Tsf_i\xbm-\Tsf_i\ybm\|.
\end{align}

\medskip\noindent
Define $\Rsf_i \defn \Usf_i^\Tsf - \Psf_i$ and suppose (b) is true, then
\begin{align*}
(\Rsf_i\xbm-\Rsf_i\ybm)^\Tsf\hbm_i 
&= \|\hbm_i\|^2 - (\Psf_i\xbm-\Psf_i\ybm)^\Tsf\hbm_i \\
&= \|\Rsf_i\xbm-\Rsf_i\ybm\|^2 + (\Psf_i\xbm-\Psf_i\ybm)^\Tsf\hbm_i \nonumber\\
&\hspace{3cm}- \|\Psf_i\xbm-\Psf_i\ybm\|^2 \\
&\geq \|\Rsf_i\xbm-\Rsf_i\ybm\|^2.
\end{align*}
By repeating the same argument for $\Psf_i = \Usf_i^\Tsf - \Rsf_i$, we establish the full equivalence between (b) and (c).

\medskip\noindent
The full equivalence of (b) and (d) can be established by observing that
\begin{align*}
&\hspace{-2.9cm}2\|\Psf_i\xbm-\Psf_i\ybm\|^2 \leq 2(\Psf_i\xbm-\Psf_i\ybm)^\Tsf\hbm_i \\
\Leftrightarrow\quad\|\Psf_i\xbm-\Psf_i\ybm\|^2 &\leq 2(\Psf_i\xbm-\Psf_i\ybm)^\Tsf\hbm_i - \|\Psf_i\xbm-\Psf_i\ybm\|^2 \\
&= \|\hbm_i\|^2-(\|\hbm_i\|^2 - 2(\Psf_i\xbm-\Psf_i\ybm)^\Tsf\hbm_i\nonumber\\
&\hspace{3cm} + \|\Psf_i\xbm-\Psf_i\ybm\|^2)\\
&= \|\hbm_i\|^2 - \|\Rsf_i\xbm-\Rsf_i\ybm\|^2.
\end{align*}

\medskip\noindent
To show the equivalence with (e), first suppose that ${\Nsf_i \defn \Usf_i^\Tsf - 2 \Psf_i}$ is block nonexpansive, then ${\Psf_i = \frac{1}{2}(\Usf_i^\Tsf + (-\Nsf_i))}$ is block $1/2$-averaged, which means that it is block firmly nonexpansive. On the other hand, if $\Psf_i$ is block firmly nonexpansive, then it is block $1/2$-averaged, which means that from Proposition~\ref{Prop:BlockAveragedEquiv}(b) we have that $(1-2)\Usf_i^\Tsf + 2\Psf_i = 2\Psf_i - \Usf_i^\Tsf = -\Nsf_i$ is block nonexpansive. This directly means that $\Nsf_i$ is block nonexpansive.
\end{proof}

\subsection{Operator Properties for Convex Function}
\label{Sec:Convexity}

It is convenient to link properties of a function $f: \R^n \rightarrow \R$, $\xbm \mapsto y = f(\xbm)$, to the properties of operators derived from it. The key properties for our analysis are related to continuity and convexity.

\begin{proposition}
Let $f$ be continuously differentiable function with $\nabla_i f$ that is block $L_i$-Lipschitz continuous. Then,
\begin{align}
f(\ybm) &\leq f(\xbm) + \nabla f(\xbm)^\Tsf(\ybm-\xbm) + \frac{L_i}{2}\|\ybm-\xbm\|^2 \nonumber\\
&= f(\xbm) + \nabla_i f(\xbm)^\Tsf\hbm_i + \frac{L_i}{2}\|\hbm_i\|^2 \nonumber
\end{align}
for all $\xbm \in \R^n$ and $\hbm_i \in \R^{n_i}$, where $\ybm = \xbm + \Usf_i\hbm_i$.
\end{proposition}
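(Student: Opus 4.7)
The plan is to prove this via the standard trick of parameterizing along the line segment between $\xbm$ and $\ybm=\xbm+\Usf_i\hbm_i$ and invoking the fundamental theorem of calculus, exactly as in the classical descent lemma, only with the Lipschitz bound applied to $\nabla_i f$ rather than to the full gradient. The two observations that make the block version work are: first, the segment $\xbm+t\Usf_i\hbm_i$ only varies in the $i$th block, so the block Lipschitz hypothesis applies directly along the entire segment; second, the norm-preservation identity $\|\Usf_i\hbm_i\|_2=\|\hbm_i\|_2$ together with $\nabla f(\xbm)^\Tsf\Usf_i\hbm_i=\nabla_i f(\xbm)^\Tsf\hbm_i$ lets us convert between the ``full vector'' form and the ``block'' form of the inequality stated in the proposition.

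Concretely, I would define $\varphi(t)\defn f(\xbm+t\Usf_i\hbm_i)$ for $t\in[0,1]$ and differentiate by the chain rule to obtain $\varphi'(t)=\nabla_i f(\xbm+t\Usf_i\hbm_i)^\Tsf\hbm_i$. The fundamental theorem of calculus then gives
\begin{equation*}
f(\ybm)-f(\xbm)=\int_0^1 \nabla_i f(\xbm+t\Usf_i\hbm_i)^\Tsf\hbm_i\,\dsf t.
\end{equation*}
Adding and subtracting $\nabla_i f(\xbm)^\Tsf\hbm_i$ inside the integral splits this into a linear term $\nabla_i f(\xbm)^\Tsf\hbm_i$ and a remainder $\int_0^1[\nabla_i f(\xbm+t\Usf_i\hbm_i)-\nabla_i f(\xbm)]^\Tsf\hbm_i\,\dsf t$.

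For the remainder, Cauchy--Schwarz combined with the block $L_i$-Lipschitz continuity of $\nabla_i f$ (applied with the pair $\xbm$ and $\xbm+t\Usf_i\hbm_i$, which differ only in the $i$th block by $t\hbm_i$) yields the pointwise bound $tL_i\|\hbm_i\|_2^2$, and integrating $t$ from $0$ to $1$ produces the $(L_i/2)\|\hbm_i\|_2^2$ factor. Plugging back and rewriting $\nabla_i f(\xbm)^\Tsf\hbm_i$ as $\nabla f(\xbm)^\Tsf\Usf_i\hbm_i=\nabla f(\xbm)^\Tsf(\ybm-\xbm)$, and $\|\hbm_i\|_2^2$ as $\|\ybm-\xbm\|_2^2$ via norm preservation, gives both forms of the stated inequality simultaneously.

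I do not anticipate any real obstacle here: this is the textbook descent lemma, and the only subtlety is being careful that the block Lipschitz hypothesis, as defined in the paper, is indeed what is needed along the segment $\{\xbm+t\Usf_i\hbm_i:t\in[0,1]\}$. Since every point on that segment differs from $\xbm$ only in the $i$th block (with block-direction $t\hbm_i$), the hypothesis applies verbatim and the proof is a short, direct computation.
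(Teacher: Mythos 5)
Your argument is correct and is precisely the ``minor variation of the one presented in Section~2.1 of Nesterov'' that the paper invokes without writing out: the standard descent-lemma computation along the segment $\xbm+t\Usf_i\hbm_i$, with the block Lipschitz hypothesis applied to the pair $(\xbm,\xbm+t\Usf_i\hbm_i)$ and the identities $\nabla f(\xbm)^\Tsf\Usf_i\hbm_i=\nabla_i f(\xbm)^\Tsf\hbm_i$ and $\|\Usf_i\hbm_i\|_2=\|\hbm_i\|_2$ converting between the two displayed forms. Nothing is missing; this matches the paper's intended proof.
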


\begin{proof}
The proof is a minor variation of the one presented in Section~2.1 of~\cite{Nesterov2004}.
\end{proof}

\begin{proposition}
\label{Prop:LipBound2}
Consider a continuously differentiable $f$ such that $\nabla_i f$ is block $L_i$-Lipschitz continuous. Let $\xbmast \in \R^n$ denote the global minimizer of $f$. Then, we have that
\begin{align}
\frac{1}{2L_i} \|\nabla_i f(\xbm)\|^2 \leq (f(\xbm)-f(\xbmast)) \leq \frac{L_i}{2}\|\xbm-\xbmast\|^2, \nonumber\\
\text{where}\quad \xbm = \xbmast + \Usf_i \hbm_i, \quad \xbm \in \R^n, \hbm_i \in \R^{n_i}. \nonumber
\end{align}
\end{proposition}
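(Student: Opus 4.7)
The plan is to derive both inequalities directly from the block descent lemma stated in the preceding proposition, simply applied at two different anchor points.

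\textbf{Upper bound.} Since $\xbmast$ is a global minimizer of $f$, we have $\nabla f(\xbmast) = \zerobm$, and hence $\nabla_i f(\xbmast) = \Usf_i^\Tsf \nabla f(\xbmast) = \zerobm$. I would apply the block descent lemma with base point $\xbmast$ and perturbation $\hbm_i$, giving
$$f(\xbm) \leq f(\xbmast) + \nabla_i f(\xbmast)^\Tsf \hbm_i + \frac{L_i}{2}\|\hbm_i\|^2 = f(\xbmast) + \frac{L_i}{2}\|\hbm_i\|^2.$$
The norm preservation property following eq.~\eqref{Eq:SubspaceDecomposition} gives $\|\xbm - \xbmast\|_2 = \|\Usf_i \hbm_i\|_2 = \|\hbm_i\|_2$, which yields the right-hand inequality.

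\textbf{Lower bound.} The standard trick is to perform one block-gradient step from $\xbm$ along block $i$ and compare the resulting value of $f$ with $f(\xbmast)$. Let $\ybm \defn \xbm - (1/L_i) \Usf_i \nabla_i f(\xbm)$, which differs from $\xbm$ only in block $i$. Applying the block descent lemma at base point $\xbm$ with perturbation $-(1/L_i) \nabla_i f(\xbm)$ yields
$$f(\ybm) \leq f(\xbm) - \frac{1}{L_i}\|\nabla_i f(\xbm)\|^2 + \frac{1}{2L_i}\|\nabla_i f(\xbm)\|^2 = f(\xbm) - \frac{1}{2L_i}\|\nabla_i f(\xbm)\|^2.$$
Since $\xbmast$ is a global minimizer, $f(\xbmast) \leq f(\ybm)$, and rearranging gives the left-hand inequality.

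\textbf{Main obstacle.} There is no genuinely hard step here; both directions follow from the quadratic upper bound provided by block-Lipschitz smoothness. The only calculation worth being careful about is choosing the step-size $1/L_i$ in the lower bound, which is the minimizer of the one-dimensional quadratic in $t$ obtained from $f(\xbm - t\Usf_i \nabla_i f(\xbm))$ and which delivers the tight constant $1/(2L_i)$. Note also that the restriction $\xbm = \xbmast + \Usf_i\hbm_i$ is genuinely used only in the upper bound (to collapse $\|\xbm-\xbmast\|$ to $\|\hbm_i\|$); the lower bound holds in fact for every $\xbm \in \R^n$.
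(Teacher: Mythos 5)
Your proof is correct and is essentially the same argument the paper intends: the paper's ``proof'' merely cites the discussion in Section~9.1.2 of Boyd and Vandenberghe, and your two applications of the block descent lemma (at $\xbmast$ with $\nabla_i f(\xbmast)=\zerobm$ for the upper bound, and a $1/L_i$ block-gradient step from $\xbm$ for the lower bound) are exactly that standard argument adapted to the block setting. Your closing observations, including that the lower bound holds for all $\xbm\in\R^n$, are also accurate.
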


\begin{proof}
The proof is a minor variation of the discussion in Section~9.1.2 of~\cite{Boyd.Vandenberghe2004}. 
\end{proof}

\begin{proposition}
\label{Prop:BlockCocoer}
For a convex and continuously differentiable function $f$, we have
\begin{align}
\nabla_i f \text{ is block $L_i$-Lipschitz continuous} \nonumber\\
\Leftrightarrow\quad \nabla_i f \text{ is block $(1/L_i)$-cocoercive}. \nonumber
\end{align}
\end{proposition}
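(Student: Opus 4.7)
The plan is to prove the two implications separately, using a block-coordinate version of the classical Baillon–Haddad argument.

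For the easier ($\Leftarrow$) direction, I would apply the Cauchy–Schwarz inequality directly to the definition of block cocoercivity. Namely, for any $\ybm \in \R^n$ and $\hbm_i \in \R^{n_i}$, with $\xbm = \ybm + \Usf_i \hbm_i$, the assumption
$$(1/L_i)\,\|\nabla_i f(\xbm) - \nabla_i f(\ybm)\|^2 \leq (\nabla_i f(\xbm) - \nabla_i f(\ybm))^\Tsf \hbm_i \leq \|\nabla_i f(\xbm) - \nabla_i f(\ybm)\|\,\|\hbm_i\|$$
yields $\|\nabla_i f(\xbm) - \nabla_i f(\ybm)\| \leq L_i \|\hbm_i\|$, which is exactly block $L_i$-Lipschitz continuity.

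For the harder ($\Rightarrow$) direction, I would use the standard Baillon–Haddad trick, adapted to the block setting. Fix $\ybm \in \R^n$ and define the auxiliary function
$$\phi_{\ybm}(\xbm) \defn f(\xbm) - \nabla f(\ybm)^\Tsf \xbm, \qquad \xbm \in \R^n.$$
Since $f$ is convex and we subtracted a linear term, $\phi_{\ybm}$ is convex and continuously differentiable, with $\nabla \phi_{\ybm}(\xbm) = \nabla f(\xbm) - \nabla f(\ybm)$. In particular, $\nabla_i \phi_{\ybm} = \nabla_i f - \nabla_i f(\ybm)$ is block $L_i$-Lipschitz continuous (the constant shift does not affect Lipschitz continuity), and $\nabla \phi_{\ybm}(\ybm) = 0$, which by convexity means $\ybm$ is a global minimizer of $\phi_{\ybm}$.

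With this setup, Proposition~\ref{Prop:LipBound2} applied to $\phi_{\ybm}$ with minimizer $\xbmast = \ybm$ and displacement $\Usf_i \hbm_i$ yields
$$\frac{1}{2L_i}\|\nabla_i f(\xbm)-\nabla_i f(\ybm)\|^2 \leq \phi_{\ybm}(\xbm) - \phi_{\ybm}(\ybm) = f(\xbm) - f(\ybm) - \nabla_i f(\ybm)^\Tsf \hbm_i.$$
Exchanging the roles of $\xbm$ and $\ybm$ (using displacement $-\hbm_i$ from $\xbm$) gives the symmetric bound
$$\frac{1}{2L_i}\|\nabla_i f(\xbm)-\nabla_i f(\ybm)\|^2 \leq f(\ybm) - f(\xbm) + \nabla_i f(\xbm)^\Tsf \hbm_i.$$
Adding the two inequalities, the $f(\xbm)$ and $f(\ybm)$ terms cancel and I obtain
$$\frac{1}{L_i}\|\nabla_i f(\xbm)-\nabla_i f(\ybm)\|^2 \leq (\nabla_i f(\xbm) - \nabla_i f(\ybm))^\Tsf \hbm_i,$$
which is exactly block $(1/L_i)$-cocoercivity of $\nabla_i f$.

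The main obstacle to watch for is making sure the preconditions of Proposition~\ref{Prop:LipBound2} are truly met when applied to $\phi_{\ybm}$: the proposition requires a \emph{global} minimizer, and this is precisely why I introduced the affine-shifted function $\phi_{\ybm}$ (rather than working with $f$ directly), since convexity of $\phi_{\ybm}$ together with $\nabla \phi_{\ybm}(\ybm) = 0$ promotes $\ybm$ from a mere critical point to a global minimum. All other steps are mechanical consequences of this construction and the symmetry between $\xbm$ and $\ybm$.
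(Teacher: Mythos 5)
Your proof is correct and is precisely the standard Baillon--Haddad/Nesterov argument that the paper's one-line proof defers to (it cites Theorem~2.1.5 of Nesterov's lectures and calls the block version a ``minor variation''); you have simply written out that variation explicitly, correctly routing the key inequality through Proposition~\ref{Prop:LipBound2} applied to the affine-shifted function $\phi_{\ybm}$ so that its precondition of a global minimizer is met. Both directions, including the Cauchy--Schwarz argument for the converse, check out in the block-coordinate setting.
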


\begin{proof}
The proof is a minor variation of the one presented as Theorem~2.1.5 in Section~2.1 of~\cite{Nesterov2004}.
\end{proof}

\subsection{Moreau smoothing and proximal operators}
\label{Sec:MoreauTheory}

In this section, we consider a class of functions that are proper, closed, and convex, but are not necessarily differentiable. The proximal operator is a widely-used concept in such nonsmooth optimization problems~\cite{Moreau1965, Rockafellar.Wets1998}.

\begin{definition}
\label{Def:MoreauEnv}
Consider a proper, closed, and convex $h$ and a constant $\mu > 0$. We define the \emph{proximal operator}
$$\prox_{\mu h}(\xbm) \defn \argmin_{\zbm \in \R^n}\left\{\frac{1}{2}\|\zbm-\xbm\|^2 + \mu h(\zbm)\right\}$$
and the \emph{Moreau envelope}
$$h_\mu(\xbm) \defn \min_{\zbm \in \R^n} \left\{\frac{1}{2}\|\zbm-\xbm\|^2 + \mu h(\zbm)\right\}.$$
\end{definition}

\begin{proposition}
\label{Prop:GradMorProxRes}
The function $h_\mu$ is convex and continuously differentiable with a $1$-Lipschitz gradient
$$\nabla h_\mu(\xbm) = \xbm - \prox_{\mu h}(\xbm), \quad \xbm \in \R^n.$$
\end{proposition}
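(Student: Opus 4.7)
The plan is to establish the three claims---convexity, the gradient formula, and its $1$-Lipschitz property---in sequence, reusing the operator-theoretic tools developed in Supplement~\ref{Sec:Prelims}. Convexity of $h_\mu$ follows from partial minimization: the joint map $\psi(\xbm, \zbm) \defn \tfrac{1}{2}\|\zbm - \xbm\|^2 + \mu h(\zbm)$ is jointly convex in $(\xbm, \zbm)$ and strictly convex in $\zbm$ for each fixed $\xbm$, so $\prox_{\mu h}(\xbm) = \argmin_\zbm \psi(\xbm, \zbm)$ is a single-valued map and $h_\mu(\xbm) = \min_\zbm \psi(\xbm, \zbm)$ inherits convexity from $\psi$.

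For the gradient identity, I would derive a two-sided bound on the increment $h_\mu(\ybm) - h_\mu(\xbm)$ by comparing the optimal value at one point with the value of a suboptimal candidate at the other. Combining the exact expression $h_\mu(\xbm) = \tfrac{1}{2}\|\prox_{\mu h}(\xbm) - \xbm\|^2 + \mu h(\prox_{\mu h}(\xbm))$ with the suboptimal bound $h_\mu(\ybm) \leq \tfrac{1}{2}\|\prox_{\mu h}(\xbm) - \ybm\|^2 + \mu h(\prox_{\mu h}(\xbm))$ and expanding the squared norms yields
\begin{equation*}
h_\mu(\ybm) - h_\mu(\xbm) \leq (\xbm - \prox_{\mu h}(\xbm))^\Tsf(\ybm - \xbm) + \tfrac{1}{2}\|\ybm - \xbm\|^2.
\end{equation*}
Swapping the roles of $\xbm$ and $\ybm$ produces the matching lower bound with $\prox_{\mu h}(\ybm)$ in place of $\prox_{\mu h}(\xbm)$. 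Subtracting $(\xbm - \prox_{\mu h}(\xbm))^\Tsf(\ybm - \xbm)$ throughout the sandwich and controlling the resulting residual via $\|\prox_{\mu h}(\ybm) - \prox_{\mu h}(\xbm)\| \leq \|\ybm - \xbm\|$ (proved next) shows that the remainder is $O(\|\ybm - \xbm\|^2)$, which is $o(\|\ybm - \xbm\|)$, so $h_\mu$ is Fréchet differentiable at $\xbm$ with $\nabla h_\mu(\xbm) = \xbm - \prox_{\mu h}(\xbm)$.

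The final step establishes firm nonexpansiveness of $\prox_{\mu h}$, which simultaneously delivers the $1$-Lipschitz property of $\nabla h_\mu$ and closes the differentiability argument above. Writing the first-order optimality condition for the proximal problem as $\xbm - \prox_{\mu h}(\xbm) \in \mu\,\partial h(\prox_{\mu h}(\xbm))$ and invoking monotonicity of $\partial h$, the inequality $\langle (\xbm - \prox_{\mu h}(\xbm)) - (\ybm - \prox_{\mu h}(\ybm)),\, \prox_{\mu h}(\xbm) - \prox_{\mu h}(\ybm)\rangle \geq 0$ rearranges to the firm-nonexpansiveness bound $\|\prox_{\mu h}(\xbm) - \prox_{\mu h}(\ybm)\|^2 \leq \langle \prox_{\mu h}(\xbm) - \prox_{\mu h}(\ybm),\, \xbm - \ybm\rangle$. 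Applying Proposition~\ref{Prop:NonexpCocoerOp} with $b = 1$ to $\Tsf = \prox_{\mu h}$ then yields that $\Isf - \prox_{\mu h}$ is firmly nonexpansive, hence nonexpansive, which is exactly $\|\nabla h_\mu(\xbm) - \nabla h_\mu(\ybm)\| \leq \|\xbm - \ybm\|$.

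The main obstacle is the logical interdependence between the gradient identification and the $1$-Lipschitz estimate: the sandwich inequalities alone only certify that $\xbm - \prox_{\mu h}(\xbm)$ lies in the subdifferential $\partial h_\mu(\xbm)$, and upgrading from subgradient containment to Fréchet differentiability requires quantitative continuity of $\prox_{\mu h}$. The proof must therefore derive firm nonexpansiveness of $\prox_{\mu h}$ from the monotonicity of $\partial h$ before (or in parallel with) finalizing the gradient formula, so that the modulus $\|\prox_{\mu h}(\ybm) - \prox_{\mu h}(\xbm)\| \leq \|\ybm - \xbm\|$ can be inserted back into the sandwich estimate.
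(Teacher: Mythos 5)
Your proof is correct, but it reaches the key conclusion---differentiability of $h_\mu$---by a genuinely different route than the paper. The paper writes $h_\mu(\xbm) = \frac{1}{2}\|\xbm\|^2 - \phi^\star(\xbm)$ with $\phi(\zbm) = \frac{1}{2}\|\zbm\|^2 + \mu h(\zbm)$ and invokes the standard fact that the conjugate of a closed $1$-strongly convex function is everywhere differentiable with gradient equal to the maximizer, which is $\prox_{\mu h}(\xbm)$; the $1$-Lipschitz claim is then obtained by simply asserting firm nonexpansiveness of the proximal operator. You instead run a direct first-order sandwich argument, bounding $h_\mu(\ybm) - h_\mu(\xbm)$ above and below by suboptimal substitutions of $\prox_{\mu h}(\xbm)$ and $\prox_{\mu h}(\ybm)$, and you derive firm nonexpansiveness of $\prox_{\mu h}$ from scratch via the optimality condition $\xbm - \prox_{\mu h}(\xbm) \in \mu\,\partial h(\prox_{\mu h}(\xbm))$ and monotonicity of $\partial h$. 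Your version is more elementary and self-contained (it avoids importing the duality result on conjugates of strongly convex functions), and you correctly identify and resolve the circularity hazard---the quantitative continuity of $\prox_{\mu h}$ must be in hand before the sandwich upgrades subgradient containment to Fr\'echet differentiability---whereas the paper's duality route sidesteps that issue entirely and is shorter. The convexity argument via partial minimization of the jointly convex $q(\xbm,\zbm)$ is identical in both. One small slip: to pass from firm nonexpansiveness of $\prox_{\mu h}$ to firm nonexpansiveness of $\Isf - \prox_{\mu h}$ the relevant equivalence is Proposition~\ref{Prop:NonexpEquiv} (parts (b) and (c) with $\beta_i = 1$), not Proposition~\ref{Prop:NonexpCocoerOp}, which only yields $(1/2)$-cocoercivity of the residual from plain nonexpansiveness; since you have the stronger firm-nonexpansiveness inequality in hand, the conclusion stands, but the citation should be corrected.
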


\begin{proof}
We first show that $h_\mu$ is convex. Consider
$$q(\xbm, \zbm) \defn \frac{1}{2}\|\zbm-\xbm\|^2 + \mu h(\zbm),$$
which is convex $(\xbm, \zbm)$. Then, for any $0 \leq \theta \leq 1$ and $(\xbm_1, \zbm_1), (\xbm_2, \zbm_2) \in \R^{2n}$, we have 
\begin{align}
h_\mu(\theta\xbm_1+(1-\theta)\xbm_2) &\leq q(\theta\xbm_1+(1-\theta)\xbm_2, \theta\zbm_1+(1-\theta)\zbm_2) \nonumber\\
&\leq \theta q(\xbm_1, \zbm_1) + (1-\theta) q(\xbm_2, \zbm_2),
\end{align}
where we used the convexity of $q$. Since this inequality holds everywhere, we have
$$h_\mu(\theta\xbm_1+(1-\theta)\xbm_2) \leq \theta h_\mu (\xbm_1) + (1-\theta) h_\mu(\xbm_2),$$ 
with $$h_\mu(\xbm_1) = \min_{\zbm_1}q(\xbm_1, \zbm_1) \quad\text{and}\quad h_\mu(\xbm_2) = \min_{\zbm_2}q(\xbm_2, \zbm_2).$$

\medskip\noindent
To show the differentiability, note that
\begin{align*}
h_\mu(\xbm) 
&= \frac{1}{2}\|\xbm\|^2 - \max_{\zbm \in \R^n}\left\{\xbm^\Tsf\zbm - \mu h(\zbm) - \frac{1}{2}\|\zbm\|^2\right\} \\
&= \frac{1}{2}\|\xbm\|^2 - \phi^\star(\xbm) \quad\text{with}\quad \phi(\zbm) \defn \frac{1}{2}\|\zbm\|^2 + \mu h(\zbm),
\end{align*}
where $\phi^\star$ denotes the conjugate of $\phi$. The function $\phi$ is closed and $1$-strongly convex. Hence, we know that $\phi^\star$ is defined for all $\xbm \in \R^n$ and is differentiable with gradient~\cite{Boyd.Vandenberghe2004}
$$\nabla \phi^\star(\xbm) = \argmax_{\zbm \in \R^n} \left\{\xbm^\Tsf\zbm - \mu h(\zbm) - \frac{1}{2}\|\zbm\|^2\right\} = \prox_{\mu h}(\xbm).$$
Hence, we conclude that
$$\nabla h_\mu(\xbm) = \xbm - \nabla \phi^\star(\xbm) = \xbm - \prox_{\mu h}(\xbm).$$
Note that since the proximal operator is firmly nonexpansive, $\nabla h_\mu$ is also firmly nonexpansive, which means that it is $1$-Lipschitz.
\end{proof}

The next result shows that the Moreau envelope can serve as a smooth approximation to a nonsmooth function.
\begin{proposition}
\label{Prop:UniformBoundMoreau}
Consider $h \in \R^n$ and its Moreau envelope $h_\mu(\xbm)$ for $\mu > 0$. Then,
$$0 \leq h(\xbm) - \frac{1}{\mu}h_\mu(\xbm) \leq \frac{\mu}{2}G_\xbm^2\quad\text{with}\quad G_\xbm^2 \defn \min_{\gbm \in \partial h(\xbm)} \|\gbm\|^2, \quad \xbm \in \R^n.$$
\end{proposition}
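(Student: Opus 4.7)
The plan is to prove the two inequalities separately, both directly from the variational definition of the Moreau envelope.

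For the lower bound $h(\xbm) - \tfrac{1}{\mu}h_\mu(\xbm) \geq 0$, I would simply exploit the fact that $h_\mu(\xbm)$ is a minimum over $\zbm \in \R^n$, so plugging in the feasible point $\zbm = \xbm$ into the definition in Definition~\ref{Def:MoreauEnv} yields $h_\mu(\xbm) \leq \tfrac{1}{2}\|\xbm-\xbm\|^2 + \mu h(\xbm) = \mu h(\xbm)$. Dividing by $\mu > 0$ gives the desired inequality.

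For the upper bound, the key idea is to replace $h(\zbm)$ inside the envelope by its subgradient-based linear minorant, which yields a tractable quadratic minimization. Fix any $\gbm \in \partial h(\xbm)$; by convexity, $h(\zbm) \geq h(\xbm) + \gbm^\Tsf(\zbm-\xbm)$ for all $\zbm \in \R^n$. Substituting this into the definition of $h_\mu$, I get
\begin{equation*}
h_\mu(\xbm) \geq \min_{\zbm \in \R^n}\left\{\tfrac{1}{2}\|\zbm-\xbm\|^2 + \mu h(\xbm) + \mu\gbm^\Tsf(\zbm-\xbm)\right\}.
\end{equation*}
The minimum of this strongly convex quadratic in $\zbm$ is attained at $\zbm-\xbm = -\mu\gbm$, and a direct calculation yields $h_\mu(\xbm) \geq \mu h(\xbm) - \tfrac{\mu^2}{2}\|\gbm\|^2$. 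Dividing by $\mu$ and rearranging gives $h(\xbm) - \tfrac{1}{\mu}h_\mu(\xbm) \leq \tfrac{\mu}{2}\|\gbm\|^2$. Since $\gbm \in \partial h(\xbm)$ was arbitrary, I would finally minimize over the subdifferential to obtain $h(\xbm) - \tfrac{1}{\mu}h_\mu(\xbm) \leq \tfrac{\mu}{2}G_\xbm^2$.

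The only nontrivial point is ensuring that $\partial h(\xbm)$ is nonempty so that $G_\xbm^2 = \min_{\gbm \in \partial h(\xbm)}\|\gbm\|^2$ is well defined. This is standard for proper, closed, convex $h$ at any $\xbm \in \mathrm{ri}(\mathrm{dom}\,h)$; at boundary points where $\partial h(\xbm) = \varnothing$, the bound holds vacuously by the convention $\min \varnothing = +\infty$. In the downstream usage (Theorem~\ref{Thm:ProxConv}), this is handled by Assumption~\ref{As:Subgradient}, which uniformly bounds $G_0$ over the ball containing the iterates, so no subtleties arise. I do not anticipate a genuine obstacle; the proof is essentially two lines plus a minimization of a quadratic.
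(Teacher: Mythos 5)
Your proof is correct and follows essentially the same route as the paper: the lower bound by taking the suboptimal point $\zbm = \xbm$ in the envelope, and the upper bound by replacing $h(\zbm)$ with its subgradient minorant and completing the square in the resulting quadratic. Your added remark on the nonemptiness of $\partial h(\xbm)$ is a reasonable clarification the paper leaves implicit, but it does not change the argument.
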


\begin{proof}
First note that
$$\frac{1}{\mu}h_\mu(\xbm) = \min_{\zbm \in \R^n}\left\{\frac{1}{2\mu}\|\zbm-\xbm\|^2 + h(\zbm)\right\} \leq h(\xbm), \quad \xbm \in \R^n,$$
which is due to the fact that $\zbm = \xbm$ is potentially suboptimal. We additionally have for any $\gbm \in \partial h(\xbm)$
\begin{align*}
h_\mu(\xbm) - \mu h(\xbm) &= \min_{\zbm \in \R^n}\left\{\mu h(\zbm) - \mu h(\xbm) + \frac{1}{2}\|\zbm-\xbm\|^2\right\} \\
&\geq \min_{\zbm \in \R^n}\left\{\mu \gbm^\Tsf(\zbm-\xbm) + \frac{1}{2}\|\zbm-\xbm\|^2\right\} \\
&= \min_{\zbm \in \R^n} \left\{\frac{1}{2}\|\zbm-(\xbm-\mu\gbm)\|^2 - \frac{\mu^2}{2}\|\gbm\|^2\right\}\\
&= -\frac{\mu^2}{2}\|\gbm\|^2.
\end{align*}
This directly leads to the conclusion.
\end{proof}

\section{Additional Technical Details}
\label{Sec:TechnicalDetails}

In this section, we discuss several technical details that we omitted from the main paper for space. Section~\ref{Sec:ComputationalComplexity} discusses issues related to implementation and computational complexity of BC-RED. Section~\ref{Sec:ArchitectureTraining} discusses the architecture of our own CNN denoiser $\DnCNNast$ and provides details on its training. Section~\ref{Sec:InfluenceLipschitz} discusses the influence of the Lipschitz constant of the CNN denoiser on its performance as a denoising prior.

\subsection{Computational Complexity and a Coordinate-Friendly Implementation}
\label{Sec:ComputationalComplexity}

Theoretical analysis in Section~\ref{Sec:TheoretcalResults} of the main paper suggests that, if $b$ updates of BC-RED (each modifying a single block) are counted as a single iteration, the worst-case convergence rate of BC-RED is expected to be better than that of the full-gradient RED. This fact was empirically validated in Section~\ref{Sec:Simulations}, where we showed that in practice BC-RED needs much fewer iterations to converge. However, the overall computational complexity of two methods depends on their per-iteration complexities. In particular, the overall complexity of BC-RED is favorable when its total number of iterations required for convergence offsets the cost of solving the problem in a block-coordinate fashion. As for traditional coordinate descent methods~\cite{Peng.etal2016, Niu.etal2011}, in many problems of interest, the computational complexity of a single update of BC-RED will be roughly $b$ times lower than that of the full-gradient method.

The computational complexity of each block-update will depend on the specifics of the data-fidelity term $g$ and the denoiser $\Dsf$ used in the estimation problem. For example, consider the problem where $g(\xbm) = \frac{1}{2}\|\Abm\xbm-\ybm\|_2^2$. Additionally, suppose that $\xbm$ is such that it is sufficient represent its prior with a block-wise denoiser on each $\xbm_i$, rather than on the full $\xbm$. This situation is very common in image processing, where many popular denoisers are applied block-wise~\cite{Zoran.Weiss2011}. Then, one can obtain a very efficient implementation of BC-RED, illustrated in Algorithm~\ref{Alg:BCRED2}.

The worst-case complexity of applying $\Abm_i$ and $\Abm_i^\Tsf$ is $O(m n_i)$, which means that the cost of $b$ updates such updates for $i \in \{1, \dots, b\}$ is $O(mn)$. Additionally, if the complexity of $b$ block-wise denoising operations is equivalent or less than the complexity of denoising the full vector (which is generally true for advanced denoisers), then the complexity of $b$ updates of BC-RED will be equivalent or better than a single iteration of the full-gradient RED.

Some of our simulations were conducted using denoisers applied on the full-image and others using block-wise denoisers. In particular, the convergence simulations in Fig.~\ref{Fig:convergenceCT} and Fig.~\ref{Fig:ConvergencePlots} relied on the full-image denoisers, in order to use identical denoisers for both RED and BC-RED and be fully compatible with the theoretical analysis. On the other hand, the SNR results in Table~\ref{Tab:SNR}, Table~\ref{Tab:LipschitzDiscuss}, Fig.~\ref{Fig:MoreExamples}, and Fig.~\ref{Fig:imageFlow} rely on block-wise denoisers, where the denoiser input includes an additional 40 pixel padding around the block and the output has the exact size of the block. The padding size was determined empirically in order to have a close match between BC-RED and RED. We have observed that having even larger paddings does not influence the results of BC-RED. Finally, the size of the denoiser input and output for the galaxy simulations in Fig.~\ref{Fig:galaxyImages} and Fig.~\ref{Fig:MoreGalaxies} exactly matches the block size, with no additional padding. 

\begin{algorithm}[H]
\caption{BC-RED for the least-squares data-fidelity and a block-wise denoiser}
\label{Alg:BCRED2}
\begin{algorithmic}[1]
\STATE \textbf{input: } initial value $\xbm^0 \in \R^n$, parameter $\tau > 0$, and step-size $\gamma > 0$.
\STATE \textbf{initialize: } $\rbm^0 \leftarrow \Abm\xbm^0-\ybm$
\FOR{$k = 1, 2, 3, \dots$}
\STATE Choose an index $i_k \in \{1, \dots, b\}$
\STATE $\xbm^k \leftarrow \xbm^{k-1} - \gamma \Usf_{i_k} \Gsf_{i_k}(\xbm^{k-1})$ \quad with\quad  $\Gsf_{i_k}(\xbm^{k-1}) = \Abm_{i_k}^\Tsf \rbm^{k-1} + \tau (\xbm_{i_k} - \Dsf(\xbm_{i_k}))$.
\STATE $\rbm^k \leftarrow \rbm^{k-1} - \gamma \Abm_{i_k} \Gsf_{i_k}(\xbm^{k-1})$
\ENDFOR
\end{algorithmic}
\end{algorithm} 

\subsection{Architecture and Training of $\DnCNNast$ }
\label{Sec:ArchitectureTraining}

\begin{figure}[t]
\centering\includegraphics[width=0.45\textwidth]{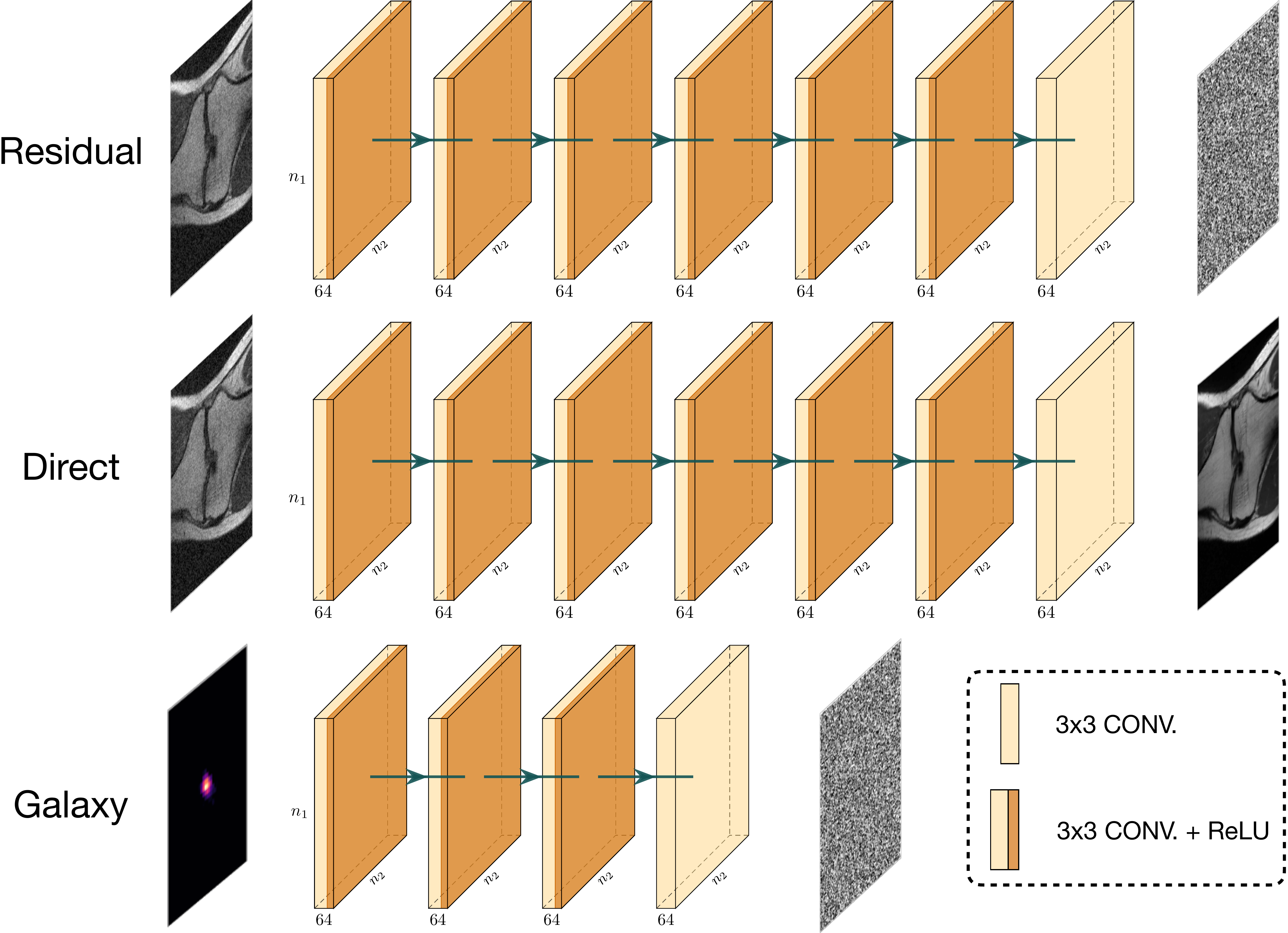}
\caption{The architecture of three variants of $\DnCNNast$ used in our simulations. Each neural net is trained to remove AWGN from noisy input images. \textbf{Residual $\DnCNNast$} is trained to predict the noise from the input. The final desired denoiser $\Dsf$ is obtained by simply subtracting the predicted noise from the input $\Dsf(\zbm) = \zbm - \mathsf{DnCNN}^\ast(\zbm)$. \textbf{Direct $\DnCNNast$} is trained to directly output a clean image from a noisy input $\Dsf(\zbm) = \mathsf{DnCNN}^\ast(\zbm)$. \textbf{Galaxy $\DnCNNast$} is a further simplification of the Residual DnCNN to only 4 convolutional layers specifically designed for large-scale image recovery. In most experiments, we further constrain the Lipschitz constant (LC) of the direct denoiser to be LC = 1 and of the residual denoiser to LC = 2 by using spectral normalization~\cite{Sedghi.etal2019}. LC = 1 means that $\Dsf$ is a nonexpansive denoiser. A residual $\Rsf = \Isf - \Dsf$ with LC = 2 provides a necessary (but not sufficient) condition for $\Dsf$ to be a nonexpansive denoiser.}
\label{Fig:DnCNNstar}
\end{figure}

We designed $\DnCNNast$ fully based on DnCNN architecture. The network contains three parts. The first part is a composite convolutional layer, consisting of a normal convolutional layer and a rectified linear units (ReLU) layer. It convolves the $n_1 \times n_2$ input to $n_1 \times n_2 \times 64$ features maps by using 64 filters of size $3 \times 3$. The second part is a sequence of 5 composite convolutional layers, each having 64 filters of size $3 \times 3 \times 64$. Those composite layers further processes the feature maps generated by the first part. The third part of the network, a single convolutional layer, generates the final output image by convolving the feature maps with a $3 \times 3 \times 64$ filter. Every convolution is performed with a stride $=1$, so that the intermediate feature maps share the same spatial size of the input image. Fig.~\ref{Fig:DnCNNstar} visualizes the architectural details. We generated 52000 training examples by adding AWGN to 13000 images ($320 \times 320$) from the NYU fastMRI dataset~\cite{Zbontar.etal2018} and cropping them into 4 sub-images of size $160 \times 160$ pixels. We trained $\DnCNNast$ to optimize the \emph{mean squared error} by using the Adam optimizer. 


\subsection{Influence of the Lipschitz Constant on Performance} 
\label{Sec:InfluenceLipschitz}

\begin{table*}[t]
\centering
\caption{Average SNR achieved by BC-RED for two variants of $\text{DnCNN}^\ast$ at different Lipschitz constant (LC) values. Note how the stability of nonexpansive (LC = 1) direct $\DnCNNast$ comes with a suboptimal SNR performance. On the other hand, the excellent SNR performance of unconstrained direct $\DnCNNast$ comes with algorithmic instability. Finally, the residual $\DnCNNast$ with LC = 2 leads to both stable convergence and nearly SNR optimal results in all our simulations.}\vspace{5pt}
\label{Tab:LipschitzDiscuss}
\begin{tabular*}{13.7cm}{C{35pt}C{55pt}C{30pt}C{30pt}cC{30pt}C{30pt}cC{30pt}C{30pt}} \toprule
\multicolumn{2}{c}{\multirow{ 2}{*}{\textbf{Variants of $\text{DnCNN}^\ast$}}} & \multicolumn{2}{c}{\textbf{Radon}} & & \multicolumn{2}{c}{\textbf{Random}} & & \multicolumn{2}{c}{$\textbf{Fourier}$} \\
\cmidrule{3-4} \cmidrule{6-7} \cmidrule{9-10}
& & \textbf{30 dB} & \textbf{40 dB} & & \textbf{30 dB} & \textbf{40 dB} & & \textbf{30 dB} & \textbf{40 dB} \\
\midrule
\multirow{2}{*}{\textbf{Direct}} & Unconstrained & 21.67 & 24.74 & & Diverges & Diverges & & 29.40 & 30.35 \\
						& LC = 1 & 19.33 & 22.98 & &19.89 & 20.26 & & 25.06 & 25.40 \\
\cmidrule{1-10}
\multirow{2}{*}{\textbf{Residual}} & Unconstrained & 20.88 & 24.68 & & 26.49 & 27.60 & & 29.39 & 30.31 \\
						& LC = 2 & 20.88 & 24.42 & & 26.60 & 28.12 & & 29.40 & 30.39 \\
\bottomrule
\end{tabular*}
\end{table*}

Our theoretical analysis in Theorem~\ref{Thm:ConvThm1} assumes that the denoiser each block denoiser $\Dsf_i$ of $\Dsf$ is block-nonexpansive. It is relatively straightforward to control the global Lipscthiz constants of CNN denoisers via spectral normalization~\cite{Miyato.etal2018, Sedghi.etal2019, Gouk.etal2018} and we have empirically tested the influence of nonexpansiveness to the quality of final image recovery.

Table~\ref{Tab:LipschitzDiscuss} summarizes the SNR performance of BC-RED for two common variants of $\DnCNNast$. The first variant is trained to learn the \emph{direct} mapping from a noisy input to a clean image, while the second variant relies on \emph{residual learning} to map its input to noise (shown in Fig.~\ref{Fig:DnCNNstar}). To gain insight into the influence of the \emph{Lipschitz constant (LC)} of a denoiser to its performance as a prior, we trained denoisers with both globally constrained and nonconstrained LCs via the spectral-normalization technique from~\cite{Sedghi.etal2019}. For the direct network, we trained $\DnCNNast$ with ${\text{LC} = 1}$, which corresponds to a nonexpansive denoiser. For the residual network, we considered $\text{LC} = 2$, which is a necessary (but not sufficient) condition for the nonexpansiveness. In our simulations, BC-RED converged for all the variants of $\DnCNNast$, except for the direct and unconstrained $\DnCNNast$, which confirms that our theoretical analysis provides only sufficient conditions for convergence. Nonetheless, our simulations reveal the performance loss of the algorithm for the direct and nonexpansive (LC = 1) $\DnCNNast$. On the other hand, the performance of the residual $\DnCNNast$ with $\text{LC} = 2$ nearly matches the performance of fully unconstrained networks in all experiments. 

\section{Additional Numerical Validation}
\label{Sec:AdditionalSimulations}

\begin{figure*}[t]
\centering\includegraphics[width=0.9\textwidth]{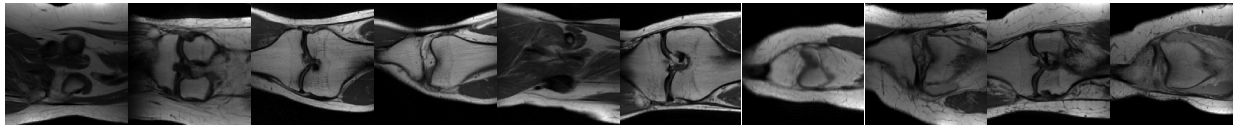}
\caption{Ten randomly selected test images from the fastMRI knee dataset~\cite{Zbontar.etal2018}.}
\label{Fig:TestImages}
\end{figure*}

Fig.~\ref{Fig:TestImages} shows ten randomly selected  test images used for numerical validation. The simulations in this paper were performed on a machine equipped with an Intel Xeon Gold 6130 Processor that has 16 cores of 2.1 GHz and 192 GBs of DDR memory. We trained all neural nets using NVIDIA RTX 2080 GPUs. 

Fig.~\ref{Fig:ConvergencePlots} presents the convergence plots for \emph{direct} and \emph{residual} $\DnCNNast$ with Radon matrix. In order to ensure nonexpansivenss, the LC of direct $\DnCNNast$ is constrained to 1. On the other hand, the LC of the residual $\DnCNNast$ is constrained to 2, which is a necessary condition for ensuring its nonexpansiveness. We compare two variants of BC-RED, one with \emph{i.i.d.}~block selection and an alternative that proceeds in \emph{epochs} of $b$ consecutive iterations, where at the start of each epoch the set $\{1, \dots, b\}$ is reshuffled, and $i_k$ is then selected consecutively from this ordered set. The figure first confirms our observation of the convergence of BC-RED under different $\DnCNNast$, and further highlights the faster convergence speed of BC-RED due to its ability to select larger step-size and immediately reuse each block update. Among two block selection rules, \emph{BC-RED (epoch)} clearly outperforms \emph{BC-RED (i.i.d.)} in all our simulations, which has also been observed in traditional coordinate descent methods~\cite{Wright2015}. However, the theoretical understanding of this gap in performance between \emph{epoch} and \emph{i.i.d.} block selection remains elusive.

Fig.~\ref{Fig:MoreExamples} visually compares the images recovered by BC-RED and RED and two baseline methods. First, the images visually illustrate the excellent agreement between BC-RED and RED. Second, leveraging advanced denoisers in BC-RED largely improves the reconstruction quality over PGM with the traditional TV prior. For instance, BC-RED under $\DnCNNast$ outperforms PGM under TV by 1 dB for Fourier matrix. Finally, we note the stability of BC-RED using the CNN denoiser versus the deteriorating performance of U-Net, which is trained end-to-end for Radon matrix with 30 dB noise. This fact highlights one key merit of the RED framework, that the CNN denoiser, only trained once, can be directly applied in different scenarios for different tasks with no degradation.

\begin{figure*}[t]
\centering\includegraphics[width=0.8\textwidth]{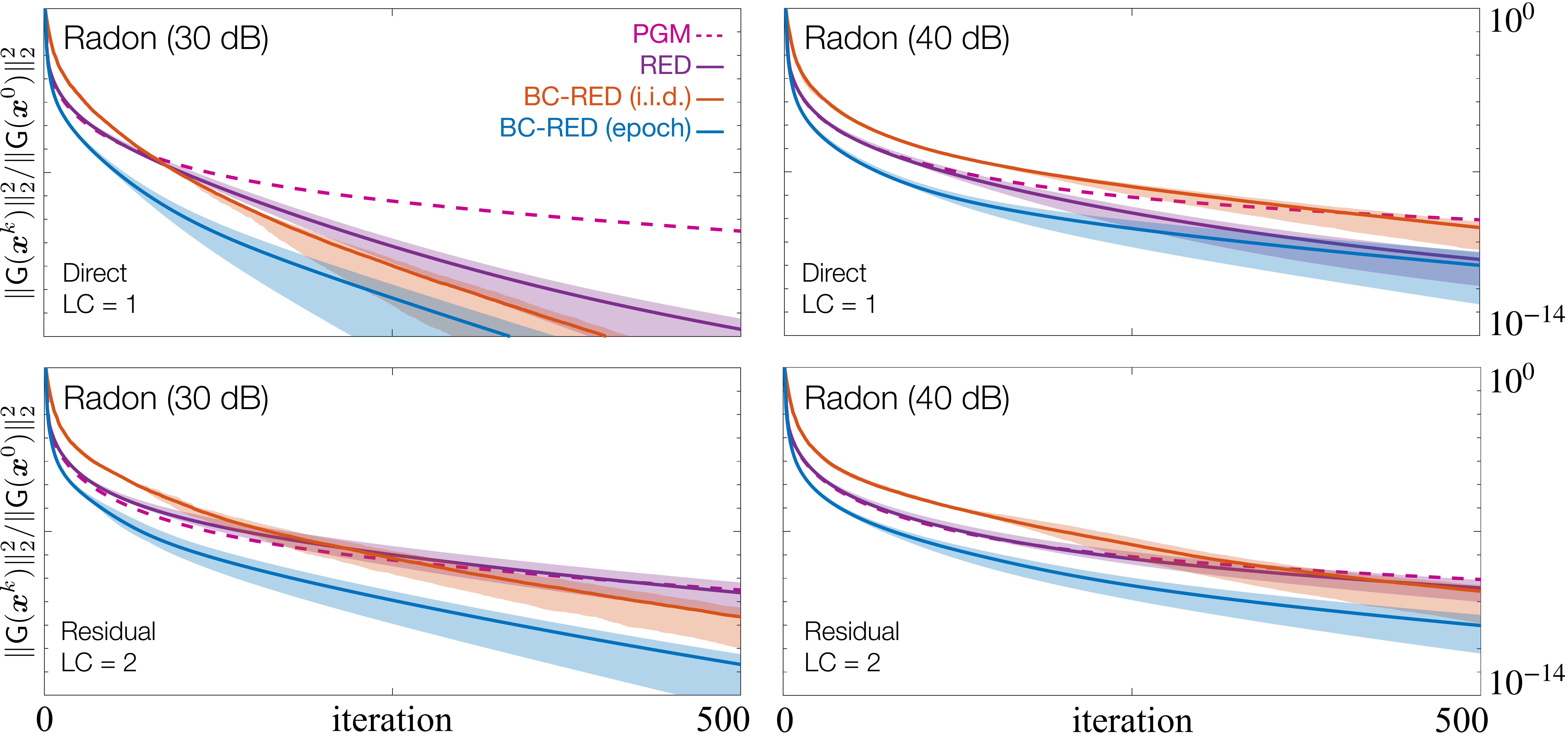}
\caption{\textbf{Left column} shows the convergence of BC-RED under different $\DnCNNast$ priors for Radon matrix with 30 dB noise. The top figure corresponds to the nonexpansive, direct $\DnCNNast$, while the bottom figure corresponds to the residual $\DnCNNast$ with Lipschitz constant of two. \textbf{Right column} shows the convergence of BC-RED under the same set of $\DnCNNast$ priors for Radon matrix with 40 dB noise. Average normalized distance to $\zer(\Gsf)$ is plotted against the iteration number with the shaded area representing the range of values taken over all test images. We observed general stability of BC-RED across all simulations for direct $\DnCNNast$ with LC = 1 and residual $\DnCNNast$ with {LC = 2.}}
\label{Fig:ConvergencePlots}
\end{figure*}

\begin{figure*}[t]
\centering\includegraphics[width=0.8\textwidth]{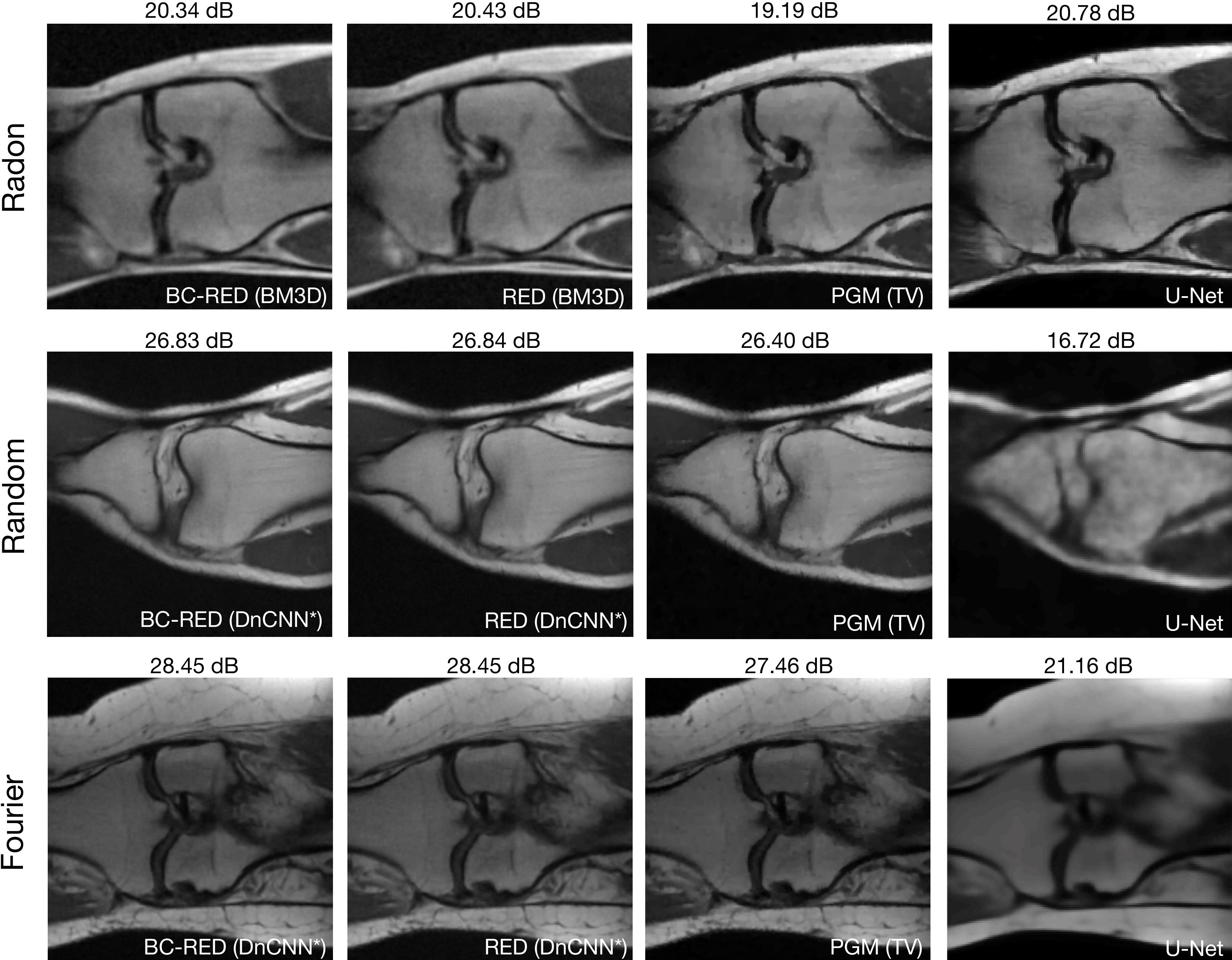}
\caption{Visual comparison between BC-RED and RED against PGM (TV) and U-Net for all three matrices with 30 dB noise. For BC-RED and RED, we selected the denoiser resulting in the best reconstruction performance. Every image is marked by its SNR value with respect to the ground truth. We highlight the excellent agreement between BC-RED and RED in all experiments. Note the strong degradation in the image quality for U-Net, due to the mismatch between the training and testing.}
\label{Fig:MoreExamples}
\end{figure*}

In BC-RED, the parameter $\tau$ controls the tradeoff between $\zer(\nabla g)$ and $\fix(\Dsf)$. Fig.~\ref{Fig:imageFlow} illustrates evolution of images reconstructed by BC-RED for different $\tau$. The first row corresponds to the reconstruction from the Fourier measurements with 30 dB noise, while the second row corresponds to the Radon measurements with 40 dB noise. The figure clearly shows how $\tau$ explicitly adjusts the balance between the data-fit and the denoiser. In particular, small $\tau$, corresponding to weak denoising, results in unwanted artifacts in the reconstructed images, while large $\tau$ promotes denoising strength but smooths out desired features and details. The leftmost images in Fig.~\ref{Fig:imageFlow}  shows the optimal balance introduced by $\tau^\ast$.

To conclude, we present the experimental details of the galaxy image recovery task. In the simulation, we inherited the dataset used in \cite{Farrens.etal2017}. The dataset\footnotemark~contains 10'000 galaxy survey images from the GREAT3 Challenge \cite{Mandelbaum.etal2014}, and each image is cropped to $41 \times 41$ pixel size. The dataset also includes 597 simulated space variant point spread functions (PSF) corresponding to 597 physical position across 4 $4096 \times 4132$ pixel CCDs~\cite{Cropper.etal2012, Kuntzer.etal2016}. In order to synthesize the $8292 \times 8364$ pixel image, we first selected 597 galaxy images from the dataset and degraded each of them by a different PSF, and then locate the degraded images back to the corresponding positions in the full image. Note that we also contaminated each degraded image with AWGN of 5 dB. Figure~\ref{Fig:DnCNNstar} shows the architecture of the 4-layer $\DnCNNast$ used as denoiser for the galaxy image recovery. We generated 72000 training examples by rotating and flipping the rest 9000 images, and trained the neural network to learn the noise residual with LC$=2$.

\footnotetext{http://www.cosmostat.org/deconvolution}

Since the locations of galaxies were known in this case, we optimized the speed of BC-RED by only updating the blocks containing galaxies. In practice, such block selection strategies can be efficiently implemented by applying a threshold on image intensities to separate blocks with galaxies from the ones that have only noise. As illustrated in Fig.~\ref{Fig:ConvergenceGalaxy}, BC-RED converged to about $4.78 \times 10^{-5}$, in relative accuracy within 120 seconds, which corresponds to 100 iterations of the algorithm, with $b$ BC-RED updates grouped as a single iteration. Fig.~\ref{Fig:MoreGalaxies} illustrates the performance of BC-RED under $\DnCNNast$ for 4 example galaxies selected from the $1316 \times 1245$ pixel sub-image. The first row on the left shows the same galaxy in Fig.~\ref{Fig:galaxyImages} in the main paper. We obtained the reconstructed image of the low-rank matrix prior by running the algorithm with default parameter values. This experiment demonstrates that BC-RED can indeed be applied to a realistic, nontrivial image recovery task on a large image.


\begin{figure*}[t]
\centering\includegraphics[width=0.8\textwidth]{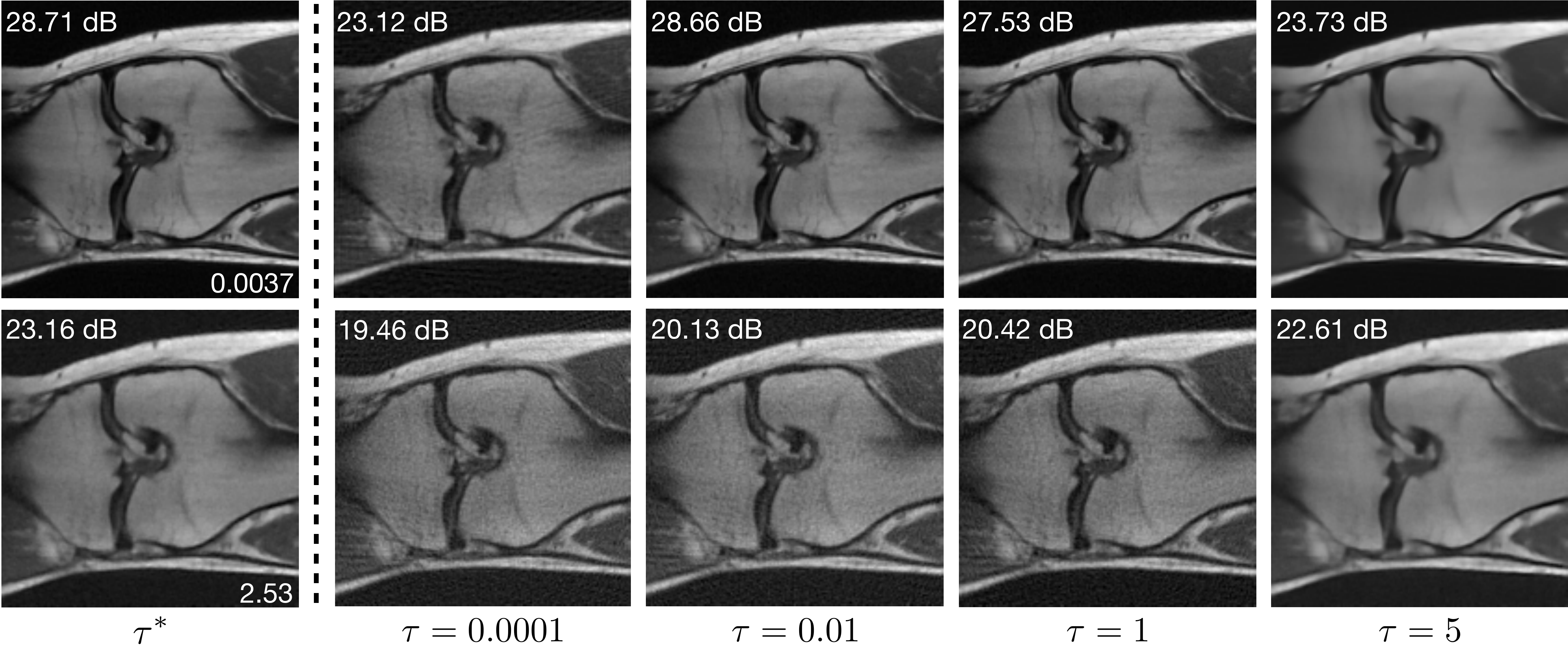}
\caption{Evolution of the images reconstructed by BC-RED using the $\DnCNNast$ denoiser for different values of $\tau$. The first row corresponds to Fourier matrix with 30 dB noise, while the second row corresponds to the Radon matrix with 40 dB noise. Each reconstructed image is marked with its SNR value with respect to the ground truth image. The optimal parameters $\tau^\ast$ for the two problems are $0.0037$ and $2.35$, respectively. The denoiser used in this simulation is the residual $\DnCNNast$ with a Lipschitz constant LC = 2. This figure illustrates how $\tau$ enables an explicit tradeoff between the data-fit and the regularization.}
\label{Fig:imageFlow}
\end{figure*}

\begin{figure}[t]
\centering\includegraphics[width=0.45\textwidth]{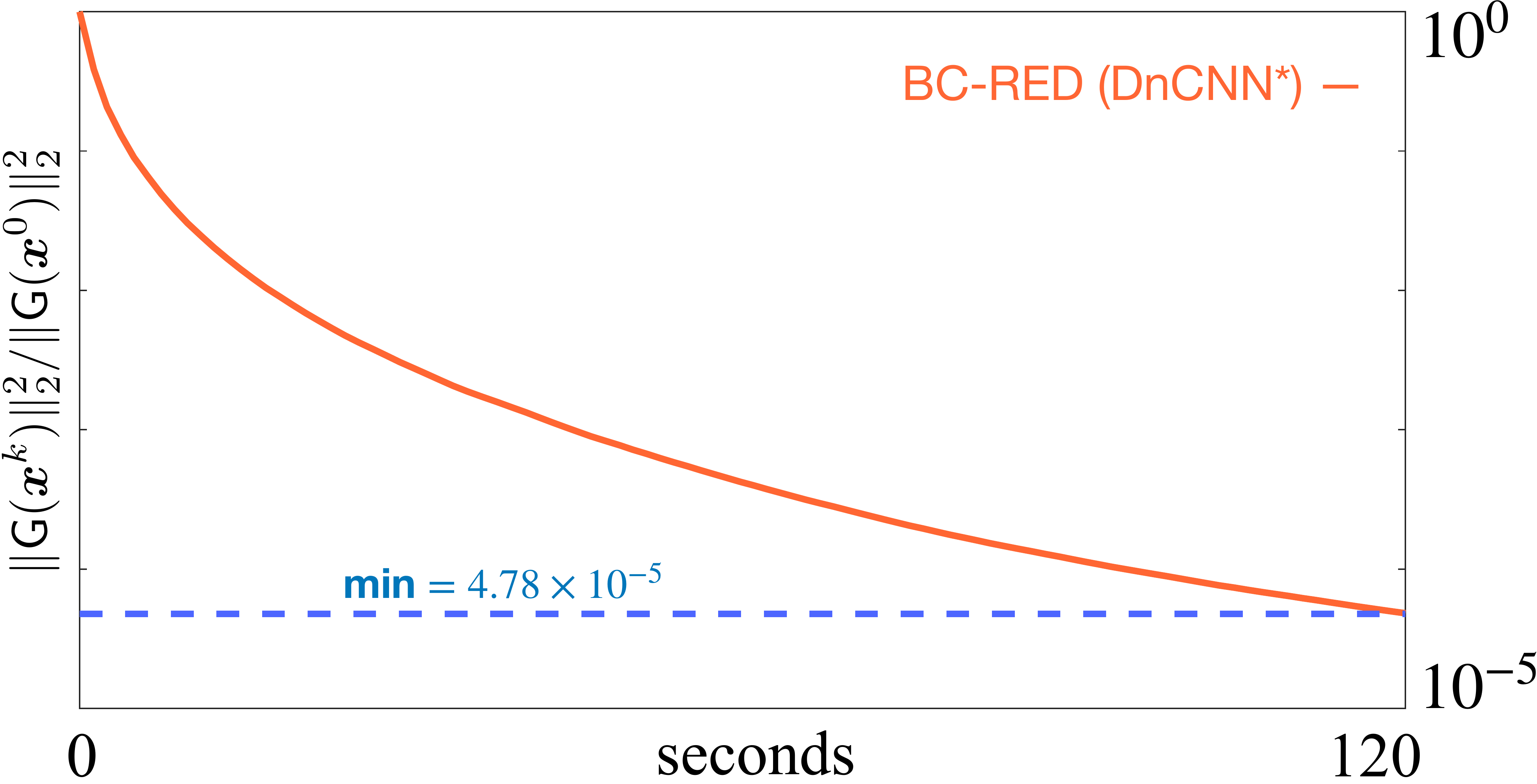}
\caption{Illustration of the convergence of BC-RED under $\DnCNNast$ in the realistic, large-scale image recovery task. BC-RED is run for 100 iterations, which leads to the accuracy of $4.78 \times 10^{-5}$ within 120 seconds. The efficiency of the algorithm is due to the sparsity of the recovery problem.}
\label{Fig:ConvergenceGalaxy}
\end{figure}

\begin{figure*}[t]
\centering\includegraphics[width=0.8\textwidth]{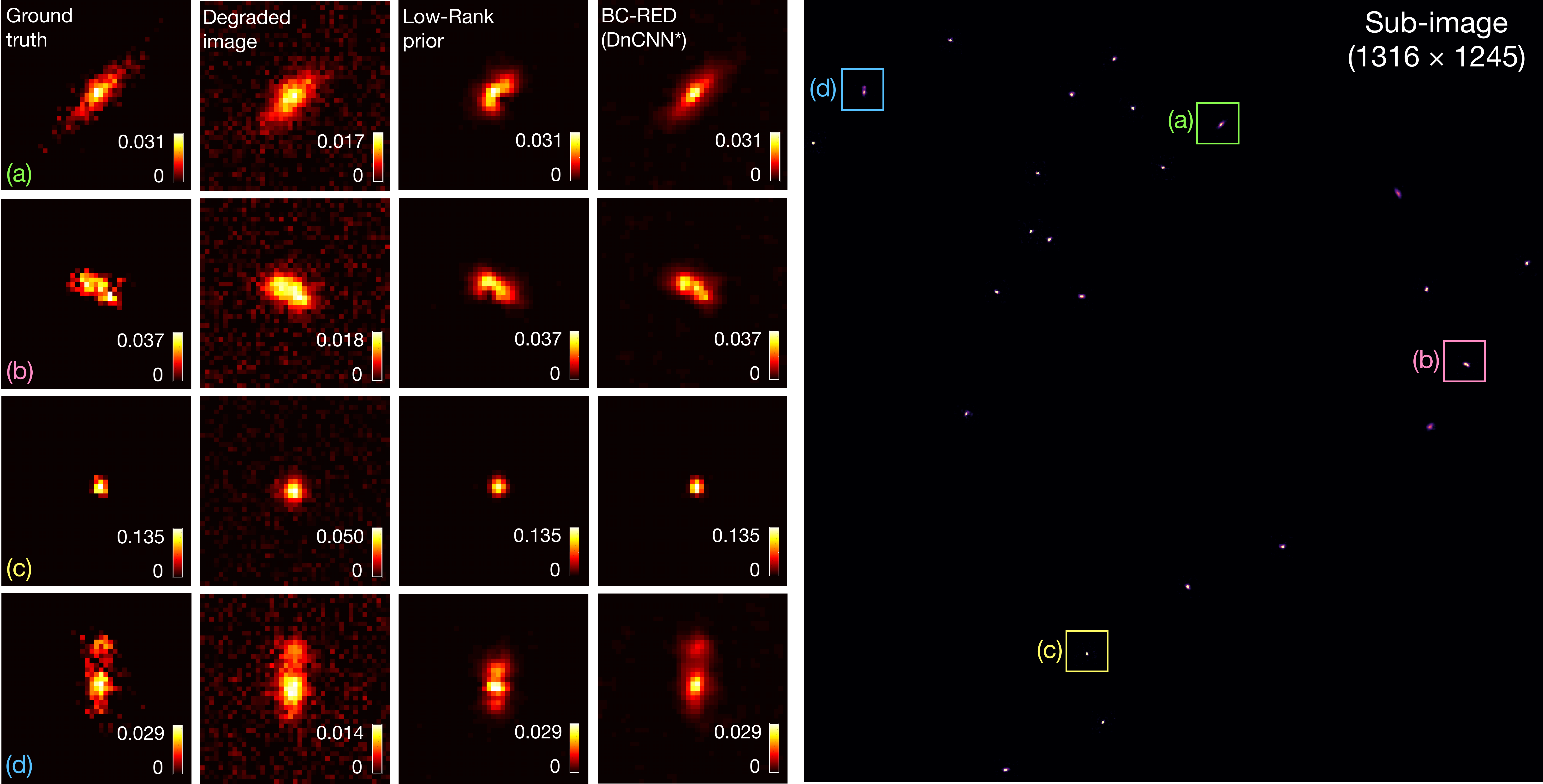}
\caption{Illustration of performance of BC-RED under residual $\DnCNNast$ denoiser with LC = 2. The first and the second columns show the ground truth images and the blocks from the measurement, respectively. The third and the forth columns are the reconstructed results obtained by BC-RED and the low-rank matrix prior \cite{Farrens.etal2017}, respectively. The rightmost image is a $1316\times1245$ pixel sub-image of the full-sized $8292 \times 8364$ pixel reconstructed image obtained by BC-RED. Note that the intent of this figure is not to justify $\DnCNNast$ as a prior for image recovery, but to demonstrate that BC-RED can indeed be applied to a realistic, nontrivial image recovery task on a large image.}
\label{Fig:MoreGalaxies}
\end{figure*}

\bibliographystyle{IEEEtran}


\end{document}